\documentclass{article}

\usepackage{arxiv}

\usepackage[utf8]{inputenc} 
\usepackage[T1]{fontenc}    
\usepackage{hyperref}       
\usepackage{url}            
\usepackage{booktabs}       
\usepackage{amsfonts}       
\usepackage{nicefrac}       
\usepackage{microtype}      
\usepackage{lipsum}

\usepackage{graphicx}   

\usepackage{amsmath}
\usepackage{subcaption}
\usepackage{multirow}
\usepackage{tcolorbox}
\usepackage{adjustbox}
\usepackage{hyperref}
\usepackage{cite}
\usepackage{amssymb}
\usepackage[inkscapelatex=false]{svg}
\usepackage{mathtools}
\usepackage{xcolor}
\usepackage{lipsum}
\usepackage{pifont}
\usepackage{threeparttablex}
\usepackage{float}
\usepackage{algorithm}
\usepackage[noend]{algpseudocode}
\algrenewcommand\algorithmicrequire{\textbf{Input:}}
\algrenewcommand\algorithmicensure{\textbf{Output:}}
\usepackage{stmaryrd}

\newcommand{\R}{{\mathbb{R}}}

\newcommand{\B}{{\mathbb B}}
\newcommand{\cen}{{\mathbf{c}}}
\newcommand{\rad}{{\mathbf{r}}}
\newcommand{\N}{{\mathbb{N}}}
\newcommand{\G}{{\mathbf{G}}}

\newcommand{\X}{{\mathbf{X}}}

\newcommand{\T}{{\mathbf{T}}}
\newcommand{\sbo}{{\mathsf{s}}}
\newcommand{\So}{{\mathbf{S}}}
\newcommand{\Obs}{{\mathcal{O}}}

\newcommand{\tx}{{t_{\times}}}

\newcommand{\cmark}{\ding{51}}
\newcommand{\xmark}{\ding{55}}

\DeclareMathAlphabet{\mathbbold}{U}{bbold}{m}{n}

\newtheorem{theorem}{Theorem}[section]

\newtheorem{assumption}{Assumption}

\newenvironment{proof}{Proof:}{\hfill$\square$}
\newtheorem{definition}[theorem]{Definition}
\newtheorem{lemma}[theorem]{Lemma}
\newtheorem{remark}[theorem]{Remark}

\title{Learning Spatiotemporal Tubes for Full Class of Signal Temporal Logic Tasks for Control of Unknown Systems under Input Constraints
\thanks{The work of Ratnangshu Das is supported by the Prime Minister’s Research Fellowship from the Ministry of Education, Government of India.}
}

\author{
 Ahan Basu \\
  Centre for Cyber-Physical Systems\\
  Indian Institute of Science, Bengaluru, India\\
  \texttt{ahanbasu@iisc.ac.in} \\
   \And
 Ratnangshu Das \\
   Centre for Cyber-Physical Systems\\
  Indian Institute of Science, Bengaluru, India\\
  \texttt{ratnangshud@iisc.ac.in} \\
  \And
  Soumyodipta Nath\\
   Centre for Cyber-Physical Systems\\
  Indian Institute of Science, Bengaluru, India\\
  \texttt{soumyodiptan@iisc.ac.in} \\
  \And
  Siyuan Liu \\
   Electrical Engineering Department, \\ Eindhoven Institute of Technology, Netherlands \\
  \texttt{s.liu5@tue.nl} \\
  \And
 Pushpak Jagtap \\
  Centre for Cyber-Physical Systems\\
  Indian Institute of Science, Bengaluru, India\\
  \texttt{pushpak@iisc.ac.in} \\
}

\begin{document}

\maketitle

\begin{abstract}
This paper presents a Spatiotemporal Tube (STT)-based control framework for general unknown nonlinear Euler–Lagrange (EL) systems subject to input constraints, with the objective of satisfying Signal Temporal Logic (STL) specifications, {where confinement of the system trajectory within the STT guarantees the satisfaction of the corresponding STL task.} For both single and multi-agent scenarios, the STT corresponding to each agent is modeled as a time-varying ball, whose center and radius are jointly parameterized using a physics-informed neural network (PINN). The robustness metric associated with the STL specification corresponding to the agents is incorporated into the training process as a loss function, enabling the learned tube to encode task-level temporal requirements. For a multi-agent scenario, we introduce an additional robustness metric corresponding to the global task, which, when satisfied, ensures the tubes do not collide with each other. To ensure that the system trajectory remains within the learned STT and thereby satisfies the local and global STL specifications, we propose a control strategy that explicitly accounts for input constraints. In particular, a closed-form control law is developed to keep the trajectory inside the tube while regulating the motion of the tube by enforcing bounds on its evolution depending on the input constraints of the system. The proposed approach has been validated over several case studies. 
\end{abstract}


\section{Introduction}

Modern control applications, such as safe and reliable operations of robots and unmanned vehicles, demand satisfying complex time-dependent task specifications instead of the traditional stabilization and trajectory tracking problem. Signal Temporal Logic (STL) \cite{maler2004monitoring} provides a formal language framework to describe these tasks, such as requiring a robot to transport materials between specified locations within a prescribed time horizon, and avoid time-varying hazardous regions, like areas undergoing scheduled maintenance. Moreover, STL provides a quantitative measure of task satisfaction through robustness metrics \cite{donze2010robust}. Essentially, STL has become an increasingly popular tool in areas such as deep learning \cite{liu2021recurrent}, reinforcement learning \cite{saxena2023funnel}, learning from demonstration \cite{puranic2021learning}, planning and control \cite{sun2022multi,juvvi2025signal}. Despite its wide applicability, control under STL specifications is challenging due to algorithmic and computational issues.

Mixed-Integer Programming (MIP) \cite{raman2015reactive} and gradient-based methods \cite{leung2023backpropagation} have been used to enforce STL constraints, but because they rely on optimization, they often do not scale well as the STL specifications become more complex or the system dynamics become more complicated. In recent years, Model Predictive Control has been deployed as a promising tool to satisfy STL specifications within a receding-horizon framework \cite{raman2014model}, but it also often suffers from computational complexity issues for complex tasks. Symbolic control provides formal correctness guarantees but is prone to the curse of dimensionality due to state-space abstraction \cite{tabuada2009verification}. 

An alternative approach to satisfy STL specifications is using the concept of Prescribed Performance Control (PPC), where performance funnels have been enforced on STL robustness metrics \cite{lindemann2021funnel, liu2022compositional}. Although computationally efficient, it can handle only a fragment of STL specifications. Control Barrier Function (CBF) \cite{ames2019control} enforces safety constraints with formal guarantees, and time-varying CBF has also been used to under-approximate the predicate function \cite{lindemann2018control}. However, it requires accurate system dynamics and is also limited to a fragment of STL specifications.   

The Spatiotemporal Tube (STT) framework, which provides a time-varying safe region in the state space, was introduced in \cite{das2025spatiotemporal} and subsequently extended to address the satisfaction of the full class of Signal Temporal Logic (STL) specifications \cite{STT_STL}. In these works, the tubes are generated by solving an optimization problem requiring a predefined template of the tubes, which restricts their expressivity and suffers from increased computational complexity for more complex specifications. Moreover, these tubes rely on unbounded control inputs and can not handle the input constraints, making it infeasible for real-world systems. 

In recent years, leveraging the universal approximation property of neural networks, several efforts have been made to synthesize safety controllers for STL specifications \cite{liu2021recurrent, liu2023learning}, which often struggle to enforce formal correctness guarantees, require additional computational burden, and can be applied only for some fragment of the STL specification. In addition, these neural network-based methods still rely on extensive state-space data, necessitating access to a black-box or simulator model for generating training samples, which is often difficult to obtain in real-world scenarios.

In this work, we introduce the Physics-informed Neural Spatiotemporal Tube (PINSTT) framework that constructs the STT directly for the full class of STL specifications, without the knowledge of system dynamics and state-space samples. We frame the STL robustness metric \cite{donze2010robust} as the loss function of the neural network and ensure a positive radius of the tube by forming another loss function. A key contribution of our work is the use of STT with spherical cross-sections, which significantly reduces computational complexity compared to its hyper-rectangle counterparts \cite{das2025spatiotemporal, STT_STL}. The training of the neural network is performed over the collocation points collected over the augmented time horizon. We verify the trained PINSTT over the continuous horizon on the fly using a Lipschitz-based validity condition. Leveraging the automatic differentiation property of the physics-informed neural network \cite{sharma2022accelerated}, we enforce the Lipschitz boundedness of the center and radius of the constructed tube. The Lipschitz bounds are chosen in such a way that the tube moves in conjunction with the input constraints of the system. Then, based on the velocity and torque bounds of the system, we derive feasibility conditions and subsequently provide an approximation-free, closed-form control law to keep the system within the tube, thereby ensuring satisfaction of the STL specification. Finally, the approach is validated over several case studies. 

\textit{Notations:}
The symbols $\N$, $ \R$, $\R^+$, and $\R_0^+$ denote the set of natural, real, positive real, and nonnegative real numbers. 
The Euclidean norm of a vector is denoted by $\lVert \cdot \rVert$.
For $a, b \in \mathbb{N}$ with $a \leq b$, the closed interval in $\mathbb{N}$ is denoted as $[a; b]$. 
The set difference between two sets $\mathcal{A}$ and $\mathcal{B}$ is defined as $\mathcal{A}\setminus \mathcal{B}$.
A ball centered at $\cen \in \mathbb{R}^n$ with radius $\rad \in \mathbb{R}^+$ is defined as $\B(\cen, \rad) := \{ x \in \mathbb{R}^n \mid \|x - \cen\| \leq \rad \}$. For all $x, y \in \R^n$, the vector inequality $x \preceq y$ represents $x_i \leq  y_i$ for all $i \in [1;n]$.
We define the sup norm $\|\cdot\|_{[a,b]}$ of a signal $s:\R_0^+ \rightarrow \R^n$ over time interval $[a,b]$ as $\|s\|_{[a,b]}:=\min_{t\in [a,b]}\|s(t)\|$.
The symbol $x \odot y$ represents the Hadamard product (elementwise multiplication) of the two vectors of $x, y \in \R^n$. 
We use $\mathbf{I}_n$ and $\mathbf{0}^{n \times m}$ to denote the identity matrix of size $n \times n$ and the zero matrix of size $n \times m$, respectively. 
$x \uparrow (\downarrow) c$ implies $x$ approaches $c$ from the left (right) side.
\section{Preliminaries and Problem Formulation}
\label{sec:prelim}

\subsection{System Definition}
Consider an Euler-Lagrange (EL) system $\mathcal{S}$ given as
\begin{align}
    \mathcal{S}: M(x)\Ddot{x} + V(x,\dot{x}) + G(x) = \tau(t) + w(t), \label{eqn:sysdyn}
\end{align}
where $x(t) = [x_1(t), \ldots, x_n(t)]^\top \in \X \subset \mathbb{R}^n$ is the system configuration, $\tau(t) \in \mathbb{R}^n$ is the control input, and $w(t) \in \mathbb{R}^n$ is an unknown disturbance. The terms $M(x) \in \mathbb{R}^{n \times n}$, $V(x,\dot{x}) \in \mathbb{R}^n$, and $G(x) \in \mathbb{R}^n$ are the inertia, Coriolis/centrifugal, and gravity components. For brevity, we omit the functional dependence and denote the terms by $M$, $V$, $G$, and $w$.


For $\mathcal{S}$, the control input is bounded, i.e., $|\tau(t)| \preceq \overline{\tau}, \quad \forall t \in \R_0^+.$ This induces implicit bounds on the system dynamics \cite[Chapter 2]{ELbook}, \cite[Chapter 7]{ELbook2}, \cite{ELbounds}. Although the disturbance $w$ and the system parameters $M$, $V$, and $G$ are unknown, their boundedness can be used for control design. To formalize this, we introduce the following assumptions.

\begin{assumption}\label{assum_d}
    The external disturbance $w$ is bounded between $-\overline{w} \preceq w(t) \preceq \overline{w},$ for all $t \in \R_0^+$, where $\overline{w} \in \R^n$ is a known bound.
\end{assumption}

\begin{assumption}\label{assum_tau}
    Given the control bound $\overline{\tau}$, there exists a positive constant $\underline{m} \in \R$, such that $\underline{m}\overline{\tau} \preceq M^{-1}\overline{\tau}$.
\end{assumption}

\begin{assumption}\label{assum_V}
The Coriolis and centrifugal terms $V$ and the gravity vector $G$ satisfy $\underline{V}_M \preceq V_M \preceq \overline{V}_M$, where $V_M := -M^{-1}(V+G)$ and $\underline{V}_M, \overline{V}_M \in \R^n$.
\end{assumption}

\begin{assumption}\label{assum_Md}
    The inverse of the mass matrix scales the disturbance as $-\|M^{-1}\| \overline{w} \preceq M^{-1}d \preceq \|M^{-1}\| \overline{w}$. This implies, there exists $\underline{m}_i \in \R^+$, such that $-\underline{m}_i \overline{w} \preceq M^{-1}d \preceq \underline{m}_i \overline{w}$.
\end{assumption}

The Assumptions \ref{assum_d}-\ref{assum_Md} provide the bounds on various system parameters, which will be used to derive feasibility conditions ensuring that both the STL and input constraints are satisfied.

\subsection{Signal Temporal Logic}
Signal Temporal Logic (STL) \cite{maler2004monitoring} is a formal language used to specify the spatial, temporal, and logical properties of continuous-time signals. The set of STL formulae can be recursively expressed using predicates $\mathsf{p}$. Consider the predicate function $h: \R^n \rightarrow \R$ to be Lipschitz continuous, and $h(s) \geq 0 \Rightarrow \mathsf{p}:=\textsf{true}$, else $\mathsf{p}:=\textsf{false}$. An STL formula $\phi$ is recursively defined using predicates, Boolean logic, and temporal operators:
\begin{align}\label{eq:STL}
    \phi:= \textsf{true} \ | \ \textsf{p} \ | \ \neg \phi \ | \phi_1 \land \phi_2 \ | \phi_1 \mathcal{U}_{[a,b]}\phi_2, 
\end{align}
where $\phi_1, \phi_2$ are STL formulae. Boolean operators for negation and conjunction are denoted by $\neg$ and $\land$ respectively. $\mathcal{U}$ represents the temporal until operator in the time interval $[a,b]$ with $a,b \in \R_0^+$ such that $a \leq b$. The satisfaction relation $(x,t) \models \phi$ denotes a signal $x : \R_{\geq 0} \rightarrow \R^n$, a possible state of \eqref{eqn:sysdyn}, satisfies an STL formula $\phi$ at time $t$. The STL semantics \cite{maler2004monitoring} for a signal $x$ is recursively given by:
\begin{align*}
    (x,t) \models \textsf{p} \iff& h(x(t)) \geq 0, \\
    (x,t) \models \neg \phi \iff& \neg((x,t) \models \textsf{p}), \\
    (x,t) \models \phi_1 \land \phi_2 \iff& (x,t) \models \phi_1 \land (x,t) \models \phi_2, \\
    (x,t) \models \phi_1 \mathcal{U}_{[a,b]} \phi_2 \iff& \exists \ t_1 \in [t+a, t+b], (x,t_1) \models \phi_1 \land \forall t_2 \in [t+a, t_1], (x, t_2) \models \phi_2.
\end{align*}
The disjunction, eventually and always operator can be derived as $\phi_1 \lor \phi_2 = \neg(\neg \phi_1 \land \neg \phi_2), \Diamond_{[a,b]}\phi = \textsf{true} \ \mathcal{U}_{[a,b]}\phi$ and $\square_{[a,b]}\phi = \neg \Diamond_{[a,b]}\neg \phi$. A signal $x$ satisfies an STL formula $\phi$, denoted by $x \models \phi$ if and only if $(x, 0) \models \phi$. The STL robustness metric $\rho^\phi(x,t)$ \cite{donze2010robust} quantifies the degree of satisfaction: $\rho^\phi(x,t) > 0$ implies $(x,t) \models \phi$ and its magnitude reflects the strength of satisfaction or violation. The robustness semantics are defined recursively as:
\begin{subequations}\label{eq:stl_sem}
\begin{align}
    &\rho^{\phi}(x,t) = h(x(t)), \\
    &\rho^{\neg \phi}(x,t) = -\rho^{\phi}(x,t), \\
    &\rho^{\phi_1 \land \phi_2}(x,t) = \min \big(\rho^{\phi_1}(x,t) \land \rho^{\phi_2}(x,t)\big),\\
    & \rho^{\Diamond_{[a,b]}\phi}(x,t) = \max_{t_1 \in [t+a, t+b]}\rho^\phi(x, t_1),\\
    & \rho^{\square_{[a,b]}\phi}(x,t) = \min_{t_1 \in [t+a, t+b]}\rho^\phi(x, t_1),\\
    & \rho^{\phi_1 \ \mathcal{U}_{[a,b]}\phi_2} (x,t) = \max_{t_1 \in [t+a, t+b]}\min \big(\rho^\phi_1(x, t_1), \min_{t_2 \in [t+a, t_1]}\rho^{\phi_2}(x, t_2) \big).
\end{align} 
\end{subequations}
For simplicity, robustness at $t=0$ is written as $\rho^\phi(x)$. We consider a finite time horizon $[0, t_f]$ over which the specification $\phi$ to be realized, i.e., $x:[0, t_f] \rightarrow \R^n $ such that $ x \models \phi$.

\subsection{Spatiotemporal Tubes}
To satisfy the STL specification, we leverage Spatiotemporal Tubes \cite{das2025spatiotemporal, STT_STL}, defined next.
\begin{definition}[STT for STL] \label{def:stt}
    For an STL task $\phi$ as in \eqref{eq:STL} defined over the time interval $[0, t_f]$, the time-varying ball $\Gamma\big(t \big) = \B(\cen(t), \rad(t)):=\{x \in \R^n | \quad \lVert x - \cen(t) \rVert \leq \rad(t)\}$, where $\rad: \R_0^+ \rightarrow \R^+$ and $\cen(t):=[\cen_1(t), \ldots, \cen_n(t)]^\top$ (with $\cen_i: \R_0^+ \rightarrow \R, \ i \in [1;n]$) are continuously differentiable functions, is called a Spatiotemporal Tube (STT), if the following holds:
    \begin{align}\label{eq:stt}
        \rho^{\phi}(s) \!> \!0, \! \forall s \!:\![0,t_f] \!\rightarrow \!\R^n, s.t., s(\tau)\! \in \!\Gamma(\tau), \! \forall \tau \!\in \![0, t_f].
    \end{align}
\end{definition}
\begin{remark}
    If one designs a control law that restricts the state of \eqref{eqn:sysdyn} within the STT, i.e., $x(t) \in \Gamma \big(t \big), \forall t \in [0, t_f]$, then one can ensure that the system satisfies the STL task.
\end{remark}

\subsection{Problem Statement}
Given an STL specification $\phi$ over time $t \in [0,t_f]$ as in equation \eqref{eq:STL}, the objective is to synthesize the STTs, as introduced in Definition~\ref{def:stt}. Then, for the unknown EL system in \eqref{eqn:sysdyn}, the goal is to design a closed-form, approximation-free control law $\tau(t)$ satisfying the input constraints $-\bar{\tau} \preceq \tau(t) \preceq \bar{\tau},$ such that the system configuration $x(t)$ remains constrained within the synthesized STTs for all time $t \in [0,t_f]$. By guaranteeing the forward invariance of the STTs, the resulting trajectory satisfies the STL specification, i.e., $x \models \phi$.

The central idea of the proposed framework is to transform STL satisfaction into a geometric constraint satisfaction problem. Rather than reasoning directly over the STL specification, we aim to represent a set of admissible trajectories through a time-varying tube and subsequently design a controller that ensures the system remains within this tube at all times.


\section{Neural STT for STL specifications} \label{sec:neural_STT}

The development of the proposed framework begins with the synthesis of STTs associated with the given STL specification. To this end, we first establish sufficient conditions that guarantee a time-varying ball encapsulates a subset of the trajectories satisfying the specification. These conditions form the basis for the neural STT construction presented subsequently.

\begin{lemma}\label{lem:ROP}
    The time-varying ball $\Gamma(t)$, characterized by its center $\cen(t)$ and radius $\rad(t)$, is considered a valid STT as in Definition \ref{def:stt} such that it encapsulates signals satisfying the STL specification as in Equation \eqref{eq:STL} if the following conditions are satisfied with $\eta \leq 0$:
    \begin{subequations}\label{eq:ROP}
    \begin{align}
        & \forall t \in [0,t_f]: -\rad(t) + \rad_d \leq \eta, \label{eq:radius} \\
        & \forall x: [0, t_f] \rightarrow \R^n \ \text{s.t.,}\ x(\tau) = \cen(\tau) + \lambda \rad(\tau) \sbo(\theta), \forall (\theta, \lambda, \tau) \in [0,2\pi]^{n-1} \times [0,1] \times [0,t_f] :  -\rho^\phi(x) < \eta,\label{eq:robust_ROP}
    \end{align}
    \end{subequations}
    where $\sbo(\theta) = [\cos\theta_1, \sin\theta_1\cos\theta_2, \ldots,
    \sin\theta_1\ldots\sin\theta_{n-2}\cos\theta_{n-1},  \sin\theta_1\sin\theta_2\ldots\sin\theta_{n-2}\sin\theta_{n-1}]^\top$ parametrizes unit sphere and $\rad_d \in \R^+$ is a user-defined lower bound on tube radius.
\end{lemma}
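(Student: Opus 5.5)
The plan is to verify the two requirements in Definition~\ref{def:stt} directly. These are: (i) $\Gamma(t)=\B(\cen(t),\rad(t))$ is a legitimate ball with $\rad:\R_0^+\to\R^+$, and (ii) every signal confined to $\Gamma$ on $[0,t_f]$ has $\rho^\phi>0$. Continuous differentiability of $\cen$ and $\rad$ is taken as given from the parametrization and is not addressed by \eqref{eq:ROP}.

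For (i), rearrange \eqref{eq:radius} to $\rad(t)\ge \rad_d-\eta$. Since $\eta\le 0$ and $\rad_d\in\R^+$, this gives $\rad(t)\ge \rad_d>0$ for all $t\in[0,t_f]$. So the radius is strictly positive and $\Gamma(t)$ is a nondegenerate ball.

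For (ii), the key step is to show that the family of signals quantified in \eqref{eq:robust_ROP} is exactly the family of signals confined to $\Gamma$.

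Take any $s:[0,t_f]\to\R^n$ with $s(\tau)\in\Gamma(\tau)$ for all $\tau$. Fix $\tau$ and set $v=s(\tau)-\cen(\tau)$, so $\|v\|\le\rad(\tau)$.
\begin{itemize}
\item If $v\neq 0$, let $\lambda(\tau)=\|v\|/\rad(\tau)\in[0,1]$; this is well defined by step (i). The unit vector $v/\|v\|$ lies on $S^{n-1}$, and the hyperspherical coordinates $\sbo(\theta)$ are surjective onto the unit sphere. So there is $\theta(\tau)\in[0,2\pi]^{n-1}$ with $\sbo(\theta(\tau))=v/\|v\|$.
\item If $v=0$, take $\lambda(\tau)=0$ and any $\theta(\tau)$.
\end{itemize}
In both cases $s(\tau)=\cen(\tau)+\lambda(\tau)\rad(\tau)\sbo(\theta(\tau))$. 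Conversely, any signal of this form has $\|x(\tau)-\cen(\tau)\|=\lambda\rad(\tau)\le\rad(\tau)$.

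The quantifier in \eqref{eq:robust_ROP} should be read pointwise in time, with $(\theta,\lambda)$ allowed to depend on $\tau$. Under that reading, the two signal families coincide. I expect this reading of \eqref{eq:robust_ROP} to be the main obstacle: I will state it explicitly and justify it by the surjectivity argument above. I will not require measurability or regularity of the selection $\tau\mapsto(\theta(\tau),\lambda(\tau))$, because we only need each $s(\tau)$ to be representable, not a smooth choice of coordinates.

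With the families identified, \eqref{eq:robust_ROP} gives $-\rho^\phi(s)<\eta\le 0$, hence $\rho^\phi(s)>0$. This is exactly \eqref{eq:stt}. Combined with step (i), $\Gamma$ is an STT in the sense of Definition~\ref{def:stt}.

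Finally, I will note why $\eta\le0$ is used as a common slack in both conditions. The single quantity $\eta$ plays the role of the training loss: driving it nonpositive simultaneously enforces the radius margin $\rad_d$ and the robustness margin. This explains the shared form of \eqref{eq:radius} and \eqref{eq:robust_ROP}.
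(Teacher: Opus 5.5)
Your proposal is correct and follows the same route as the paper's proof: \eqref{eq:radius} with $\eta\le 0$ gives $\rad(t)\ge\rad_d>0$, and because $\cen(\tau)+\lambda\rad(\tau)\sbo(\theta)$ sweeps out the whole ball $\Gamma(\tau)$, condition \eqref{eq:robust_ROP} yields $\rho^\phi(s)>0$ for every confined signal, which is \eqref{eq:stt}. The paper's ``they span the complete ball'' step tacitly needs your explicit stipulation that $(\theta,\lambda)$ may depend on $\tau$; if $(\theta,\lambda)$ is frozen in time, as in Lemma~\ref{lem:Lipschitz_robust}, only signals with constant tube-relative coordinates are covered, so stating the reading sharpens the paper's argument rather than departing from it.
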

\begin{proof}
    The condition \eqref{eq:radius} being satisfied with $\eta \leq 0$ implies that the tube radius is always positive, ensuring a feasible safe set of non-zero volume for all time. The condition \eqref{eq:robust_ROP} with $\eta \leq 0$ ensures that the STL robustness remains strictly positive inside the tube. $\sbo(\theta)$ covers all directions of the unit sphere while $\lambda \in [0,1]$ scales radially outward from the center of the tube to its boundary. Altogether, they span the complete ball $\B(\cen(t), \rad(t))$ ensuring that any trajectory confined within the STT satisfies the STL specification.
\end{proof}

\subsection{Sample Collection and Training Procedure}
It is evident that \eqref{eq:ROP} has infinitely many constraints due to the continuous space-time domain, rendering the problem intractable in its original form. To obtain a finite and computationally tractable solution, we sample $N$ points from the augmented space-time horizon $\mathcal{W} = [0,2\pi]^{n-1} \times [0,1] \times [0, t_f]$, denoted by $\mathsf{w}_r = (\theta_r, \lambda_r, t_r)$, $r\in [1;N]$. We consider a ball $\B(\mathsf{w}_r, \varepsilon)$ around each sample $\mathsf{w}_r$ with radius $\varepsilon$, such that there exists a $\mathsf{w}_r$ for all $(\theta, \lambda, t) \in \mathcal{W}$ satisfying 
\begin{align}\label{eq:sample}
    \lVert (\theta, \lambda, t) - \mathsf{w}_r \rVert \leq \varepsilon. 
\end{align}
This ensures that the union of such balls forms a superset of the augmented space-time domain: $\cup_{r=1}^M \B(\mathsf{w}_r, \varepsilon) \supset \mathcal{W}$.

We now seek a functional representation of the STT $\Gamma(t)$ to satisfy these sampled constraints,
\begin{subequations}\label{eq:SOP}
\begin{align}
    & \forall t_r \in [0,t_f], r \in [1;M]: -\rad(t_r) + \rad_d \leq \eta, \label{eq:SOP_radius} \\
    & \forall x: [0, t_f] \rightarrow \R^n \ \text{s.t.,}\ x(\tau_r) = \cen(\tau_r) + \lambda \rad(\tau_r) \sbo(\theta_r), \forall (\theta_r, \lambda_r, t_r) \in \mathcal{W}, r \in [1;M] :  -\rho^\phi(x) < \eta.\label{eq:robust_SOP}
\end{align}
\end{subequations}
Since the functional form of the tube center and radius is not known a priori, we model them using a neural network. Although a standard artificial neural network (ANN) can approximate the time-varying ball, training only at discrete samples does not formally guarantee smoothness or inter-sample consistency. This is undesirable because the STT center $\cen(t)$ and radius $\rad(t)$ are required to evolve smoothly over time. To overcome the limitation, we use a physics-informed neural network (PINN) framework, namely Physics-informed Neural Spatiotemporal Tube (PINSTT), 
\begin{align}\label{eqn:pinstt}
    \Gamma (t;\Upsilon) = \B(\cen(t;\Upsilon), \rad(t;\Upsilon)),
\end{align} 
with center $\cen(t;\Upsilon) = [\cen_1(t;\Upsilon), \ldots, \cen_n(t;\Upsilon)]^\top \in \R^n$, radius $\rad(t;\Upsilon) \in \R$, and the trainable parameters $\Upsilon$. 

During training of the PINN, the finite set of constraints in \eqref{eq:SOP} is encoded as physics-based penalties. Specifically, the loss function that is to be minimized is given by $\mathsf{L}(\Upsilon)$, which only consists of the physics-based sub-losses formulated from the constraints. The sub-loss terms are defined as:
\begin{subequations}\label{loss:phys_1}
\begin{align}
    \mathsf{L}_1(\Upsilon) &= \sum_{r=1}^M \text{ReLU} \big(-\rad(t_r; \Upsilon) + \rad_d - \hat{\eta} \big), \\
    \mathsf{L}_2(\Upsilon) &= \sum_{r=1}^M \text{ReLU} \big(-\rho^\phi(y(\mathsf{w}_r)) + \delta - \hat{\eta} \big),
\end{align}
\end{subequations}
where $\text{ReLU}(x):= \max(0,x)$, and the choice of $\hat{\eta}$ is described in the Algorithm~\ref{algo:NN_training} while $\delta \in \R^+$ is a small positive quantity to ensure the strict inequality corresponding to condition \eqref{eq:robust_SOP}. 

To ensure that the tube varies smoothly over time, we require $\cen(t; \Upsilon)$ and $\rad(t; \Upsilon)$ to be Lipschitz continuous, with user-defined bounds $\mathcal{L}_{\cen}$ and $\mathcal{L}_{\rad}$, respectively. {The detailed procedure of choosing the parameters $\mathcal{L}_c$ and $\mathcal{L}_r$ will be discussed in Section \ref{sec:control}.} This assumption is mild in practice, as Lipschitz continuity prevents abrupt temporal changes in the tube and, in turn, leads to smoother control inputs. In addition, neural networks equipped with slope-restricted activation functions are inherently Lipschitz continuous. To enforce these bounds, we use the automatic differentiation property of the PINNs \cite{raissi2019physics} and ensure the derivative of the output is bounded by the user-defined bounds. Accordingly, we introduce two additional sub-loss functions, defined as:
\begin{subequations}\label{loss:phys_2}
\begin{align}
    \mathsf{L}_3(\Upsilon) &= \sum_{r=1}^M \text{ReLU}(\| \dot{\cen}(t_r; \Upsilon) \| - \mathcal{L}_c), \\
    \mathsf{L}_4(\Upsilon) &= \sum_{r=1}^M \text{ReLU} (| \dot{\rad}(t_r; \Upsilon) | - \mathcal{L}_r).
\end{align}
\end{subequations}
Therefore, the combined loss is given by 
\begin{align}\label{eq:loss_phys}
    \mathsf{L}(\Upsilon) = \sum_{i=1}^4 w_i \mathsf{L}_i(\Upsilon), w_i\in \R_0^+, i \in [1;4],
\end{align}
where $w_i$ are the weights corresponding to the sub-loss terms. Algorithm~\ref{algo:NN_training} summarizes the training procedure.

\subsection{Formal Verification of the Network}
We now verify that the trained PINSTT captures the STL specification over the entire continuous space-time horizon. From \cite{akella2023lipschitz}, we know that the STL robustness function $\rho^\phi(x)$ is Lipschitz continuous with respect to signal {as all predicate functions are Lipschitz continuous and the signal evolves within the bounded subset $\X$ of the state space $\R^n$}, which means there exists $\mathcal{L}_\rho \in \R^+$ such that for any $x_1,x_2 : [0, t_f] \rightarrow \X \subset \R^n$,
$$\lVert \rho^\phi(x_1) - \rho^\phi(x_2) \rVert \leq \mathcal{L}_\rho \lVert x_1 - x_2 \rVert_{[0,t_f]}.$$ 
{Clearly, in our setting, the signal $x(t) = \cen(t;\Upsilon) + \lambda \rad(t; \Upsilon)\mathsf{s}(\theta)$ is confined within the tube, which is confined within the state-space, it ensures the robustness metric $\rho^\phi(x)$ admits a finite Lipschitz constant over the domain.}
\begin{lemma}[\cite{das2025control}] \label{lem:Lipschitz_robust}
    $\mu(\theta, \lambda):= -\rho^\phi(x)$ with $x(\tau) = \cen(\tau; \Upsilon) + \lambda \rad(\tau; \Upsilon) \sbo(\theta)$ for all $\tau \in [0,t_f]$, is Lipschitz continuous with the Lipschitz constant $\mathcal{L}_{\mu}:= \mathcal{L}_\rho \Bar{\rad} \sqrt{\mathcal{L}_s^2+1} \in \R^+$. Therefore, for all $(\theta, \lambda), (\hat{\theta}, \hat{\lambda}) \in [0,2\pi]^{n-1} \times [0,1]: \lVert \mu(\theta, \lambda) - \mu(\hat{\theta}, \hat{\lambda}) \rVert \leq \mathcal{L}_\mu \lVert (\theta, \lambda) - (\hat{\theta}, \hat{\lambda}) \rVert$.
\end{lemma}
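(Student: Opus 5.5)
We compose two Lipschitz estimates. The first is the signal-level Lipschitz property of the robustness functional established just before the statement. The second is a pointwise Lipschitz bound for the map $(\theta,\lambda)\mapsto \lambda\rad(\tau;\Upsilon)\sbo(\theta)$ that is uniform in $\tau$. Here $\bar{\rad}$ denotes an upper bound on the tube radius over $[0,t_f]$, i.e. $\rad(\tau;\Upsilon)\le\bar{\rad}$; it exists because $\rad$ is continuous on a compact interval and, in any case, the tube lies in the bounded set $\X$. The constant $\mathcal{L}_s$ denotes a Lipschitz constant of $\sbo$ on $[0,2\pi]^{n-1}$, i.e. $\|\sbo(\theta)-\sbo(\hat\theta)\|\le \mathcal{L}_s\|\theta-\hat\theta\|$. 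Such a constant exists since $\sbo$ is smooth with bounded Jacobian; each entry is a product of sines and cosines.

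Fix $(\theta,\lambda),(\hat\theta,\hat\lambda)$ and let $x,\hat x$ be the corresponding signals. Both signals take values in $\Gamma(\tau)\subset\X$, so the Lipschitz property of $\rho^\phi$ applies and gives
\[
|\mu(\theta,\lambda)-\mu(\hat\theta,\hat\lambda)| = |\rho^\phi(\hat x)-\rho^\phi(x)| \le \mathcal{L}_\rho \|x-\hat x\|_{[0,t_f]}.
\]
For the sup-norm I would use the maximum over $[0,t_f]$, which is the intended meaning even though the notation section writes $\min$. It then suffices to bound, for each $\tau$,
\[
\|x(\tau)-\hat x(\tau)\| = \rad(\tau;\Upsilon)\,\|\lambda\sbo(\theta)-\hat\lambda\sbo(\hat\theta)\|.
\]
The center cancels exactly here, which is why $\mathcal{L}_\cen$ does not appear in the constant.

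Next, split the difference as $\lambda\sbo(\theta)-\hat\lambda\sbo(\hat\theta) = \lambda(\sbo(\theta)-\sbo(\hat\theta)) + (\lambda-\hat\lambda)\sbo(\hat\theta)$. Since $|\lambda|\le1$ and $\|\sbo(\hat\theta)\|=1$, this gives
\[
\|\lambda\sbo(\theta)-\hat\lambda\sbo(\hat\theta)\| \le \mathcal{L}_s\|\theta-\hat\theta\| + |\lambda-\hat\lambda|.
\]
Applying Cauchy–Schwarz to the vectors $(\mathcal{L}_s,1)$ and $(\|\theta-\hat\theta\|,|\lambda-\hat\lambda|)$ bounds the right-hand side by $\sqrt{\mathcal{L}_s^2+1}\,\|(\theta,\lambda)-(\hat\theta,\hat\lambda)\|$. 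Multiplying by $\rad(\tau;\Upsilon)\le\bar{\rad}$, taking the maximum over $\tau$, and combining with the first display yields the claim with $\mathcal{L}_\mu=\mathcal{L}_\rho\bar{\rad}\sqrt{\mathcal{L}_s^2+1}$.

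The main obstacle is justifying the existence and meaning of the constants rather than the algebra. One must pin down $\mathcal{L}_\rho$ on the relevant signal class, namely $\X$-valued signals with the sup-norm, as in the cited result of Akella et al. One must also make sure $\bar{\rad}$ is uniform in $\tau$. Finally, one must give an explicit $\mathcal{L}_s$, and I would take $\sup_\theta\|\partial\sbo/\partial\theta\|_2$ for this. A crude bound for it is $\sqrt{n-1}$, since each column of the Jacobian has norm at most $1$. This holds because $\partial\sbo/\partial\theta_k$ is, up to the factor $\prod_{j<k}\sin\theta_j$, a unit vector in the spherical parametrization.
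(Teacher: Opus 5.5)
Your proof is correct and follows the paper's route. Your split $\lambda(\sbo(\theta)-\sbo(\hat\theta))+(\lambda-\hat\lambda)\sbo(\hat\theta)$ is exactly the paper's passage through the intermediate point $(\hat\theta,\lambda)$ in steps (i)--(ii), followed by the same Cauchy--Schwarz step; the only cosmetic difference is that you apply the triangle inequality to the signals before invoking $\mathcal{L}_\rho$, rather than to $\mu$ afterwards.

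As a minor refinement, the columns of $\partial\sbo/\partial\theta$ are mutually orthogonal with norms $\prod_{j<k}|\sin\theta_j|\le 1$, so $\mathcal{L}_s=1$ already works, which is sharper than your $\sqrt{n-1}$.
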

\begin{proof}
    We first get the Lipschitz constants with respect to the individual parameters $\theta$ and $\lambda$, and then combine the results to obtain the overall Lipschitz constant.

    From \cite{akella2023lipschitz}, we know that there exists $\mathcal{L}_\rho \in \R^+$ such that $| \rho^\phi(x_1, \ldots, x_k, \ldots, x_n) -  \rho^\phi(\hat{x}_1, \ldots, \hat{x}_k, \ldots, \hat{x}_n)| \leq \mathcal{L}_\rho |x_k -\hat{x}_k|$. $\lVert\mathsf{s}(\theta)\rVert \leq 1$ for all $\theta$ with Lipschitz bound $\mathcal{L}_s$ and $\Bar{\rad} = \max_{t \in [0,t_f]}\rad(t_r; \Upsilon)$ is finite.
    \begin{itemize}
        \item[(i)] For some fixed $\lambda \in [0,1]$, for any $\theta, \hat{\theta} \in [0,2\pi]^{n-1}$, we have $\lvert \mu(\theta,\lambda) - \mu(\hat{\theta},\lambda) \rvert \leq \mathcal{L}_\rho \Bar{\rad} \mathcal{L}_s \|\theta - \hat{\theta}\|$.

        \item[(ii)] For some fixed $\theta \in [0,2\pi]^{n-1}$, for any $\lambda, \hat{\lambda} \in [0,1]^n$, we have $\lvert \mu(\theta,\lambda) - \mu(\theta, \hat{\lambda}) \rvert \leq \mathcal{L}_\rho \Bar{\rad} |\lambda - \hat{\lambda}|$. 
    \end{itemize}
    Now combining both of them, for any $(\theta, \lambda), (\hat{\theta}, \hat{\lambda}) \in [0,2\pi]^{n-1} \times [0,1]$, we can have:
    \begin{align*}
        &| \mu(\theta, \lambda) \!-\! \mu(\hat{\theta}, \hat{\lambda})| = \lvert \mu(\theta,\lambda) \!-\! \mu(\hat{\theta},\lambda) \rvert \!+\! \lvert \mu(\hat{\theta},\lambda) \!-\! \mu(\hat{\theta},\hat{\lambda}) \rvert \leq \mathcal{L}_\rho \Bar{\rad} \mathcal{L}_s\lVert \theta \!-\! \hat{\theta}\rVert + \mathcal{L}_\rho \Bar{\rad} |\lambda \!-\! \hat{\lambda}| \leq \mathcal{L}_\mu \lVert (\theta, \lambda) \!-\! (\hat{\theta}, \hat{\lambda}) \rVert,
    \end{align*}
    where $\mathcal{L}_\mu:= \mathcal{L}_\rho \Bar{\rad} \sqrt{\mathcal{L}_s^2+1}$, which is obtained using Cauchy-Schwarz inequality. This completes the proof.
\end{proof}

We now present the main theorem of the section that the PINSTT obtained after training using the sampled points is formally verified to encapsulate the STL specification over the continuous space-time horizon. 

\begin{theorem}\label{th:constr}
    The PINSTT $\Gamma(t; \Upsilon)$ in \eqref{eqn:pinstt} obtained upon training as per Algorithm~\ref{algo:NN_training} using the sampled points as in \eqref{eq:sample}, is guaranteed to satisfy the conditions~\eqref{eq:ROP} provided that the constraints in \eqref{eq:SOP} hold and 
    $$\hat{\eta} + \mathcal{L}\varepsilon \leq 0, \ \mathcal{L}=\max\{\mathcal{L}_r, \sqrt{\mathcal{L}_\mu^2 + \mathcal{L}_\rho^2(\mathcal{L}_c^2+ \mathcal{L}_r^2)} \},$$
    where $\mathcal{L}_c, \mathcal{L}_r$ are the Lipschitz constants of the tube center and radius, while $\mathcal{L}_{\mu}$ and $\mathcal{L}_{\rho}$ are defined in Lemma \ref{lem:Lipschitz_robust}.
\end{theorem}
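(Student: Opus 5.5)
The plan is the standard sampling-plus-Lipschitz covering argument, applied separately to \eqref{eq:radius} and \eqref{eq:robust_ROP}. Fix an arbitrary point $(\theta,\lambda,t)\in\mathcal{W}$. By the covering property \eqref{eq:sample}, there is a sample $\mathsf{w}_r=(\theta_r,\lambda_r,t_r)$ with $\|(\theta,\lambda,t)-\mathsf{w}_r\|\le\varepsilon$. Training drives the losses $\mathsf{L}_1$ and $\mathsf{L}_2$ to zero, which is exactly the statement that the sampled constraints \eqref{eq:SOP} hold with margin $\hat\eta$. Concretely, at every sample we have
\begin{align*}
-\rad(t_r)+\rad_d\le\hat\eta, \qquad -\rho^\phi(y(\mathsf{w}_r))\le\hat\eta-\delta<\hat\eta .
\end{align*}
The idea is to transfer these inequalities from $\mathsf{w}_r$ to the arbitrary point, paying a Lipschitz penalty of at most $\mathcal{L}\varepsilon$. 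The condition $\hat\eta+\mathcal{L}\varepsilon\le 0$ then lets us take $\eta:=\hat\eta+\mathcal{L}\varepsilon\le0$ in \eqref{eq:ROP}, so that Lemma~\ref{lem:ROP} certifies $\Gamma$ as an STT.

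\textbf{Radius condition.} Enforcing $\mathsf{L}_4$ gives $|\dot\rad|\le\mathcal{L}_r$, so $\rad$ is $\mathcal{L}_r$-Lipschitz in time. Using $|t-t_r|\le\varepsilon$,
\begin{align*}
-\rad(t)+\rad_d\le -\rad(t_r)+\rad_d+\mathcal{L}_r|t-t_r|\le \hat\eta+\mathcal{L}_r\varepsilon\le\hat\eta+\mathcal{L}\varepsilon .
\end{align*}
This is \eqref{eq:radius} with $\eta=\hat\eta+\mathcal{L}\varepsilon\le0$.

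\textbf{Robustness condition.} Treat $-\rho^\phi$ of the signal $x=\cen+\lambda\rad\,\sbo(\theta)$ as a function of the full triple $(\theta,\lambda,t)$.
\begin{itemize}
\item In the $(\theta,\lambda)$ directions, Lemma~\ref{lem:Lipschitz_robust} gives Lipschitz constant $\mathcal{L}_\mu$.
\item In the $t$ direction, the shift changes the signal by at most $\|\cen(t)-\cen(t_r)\|+\lambda\|\sbo\|\,|\rad(t)-\rad(t_r)|\le(\mathcal{L}_c+\mathcal{L}_r)|t-t_r|$, using $\mathsf{L}_3$, $\mathsf{L}_4$, $\lambda\le1$ and $\|\sbo\|=1$. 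Composing with the $\mathcal{L}_\rho$-Lipschitz robustness gives a $t$-constant of order $\mathcal{L}_\rho\sqrt{\mathcal{L}_c^2+\mathcal{L}_r^2}$.
\end{itemize}
Mimicking the proof of Lemma~\ref{lem:Lipschitz_robust}, I would use a triangle inequality through the intermediate point $(\theta_r,\lambda_r,t)$. Applying Cauchy–Schwarz to $\mathcal{L}_\mu a+\mathcal{L}_\rho\sqrt{\mathcal{L}_c^2+\mathcal{L}_r^2}\,b$, with $a^2+b^2\le\varepsilon^2$, yields the bound
\begin{align*}
\sqrt{\mathcal{L}_\mu^2+\mathcal{L}_\rho^2(\mathcal{L}_c^2+\mathcal{L}_r^2)}\,\varepsilon .
\end{align*}
This gives $-\rho^\phi(x)<\hat\eta+\mathcal{L}\varepsilon=\eta\le0$, which is \eqref{eq:robust_ROP}.

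\textbf{Expected obstacle.} The delicate step is the time direction. The robustness $\rho^\phi$ is a functional of the whole signal, whereas the sample perturbs only one time instant. So it must be argued that the uniform (sup-norm) deviation between the two signals is controlled by the center and radius Lipschitz constants. I would do this by comparing the signal generated at $(\theta_r,\lambda_r)$ with its time-perturbed counterpart in the $\|\cdot\|_{[0,t_f]}$ sense.

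A second, smaller gap: $\mathsf{L}_3$ and $\mathsf{L}_4$ bound the derivatives only at the samples. I would treat them as holding on all of $[0,t_f]$ by the standing Lipschitz assumption on the trained network, with $\mathcal{L}_c$ and $\mathcal{L}_r$ taken as valid global constants. I would state this interpretation explicitly rather than prove it.

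Taking the maximum over the two constants gives the single $\mathcal{L}$ in the statement, which completes the plan.
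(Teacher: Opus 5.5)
Your plan matches the paper's: cover $\mathcal{W}$ by $\varepsilon$-balls and transfer the sampled inequalities with a Lipschitz penalty. The radius uses $\mathcal{L}_r$, and the robustness uses Lemma~\ref{lem:Lipschitz_robust} in the $(\theta,\lambda)$ directions plus a time term, merged by Cauchy--Schwarz. The radius part is correct.

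\textbf{The gap is in the time term.} You correctly bound the signal displacement by $(\mathcal{L}_c+\mathcal{L}_r)|t-t_r|$. You then claim a $t$-constant ``of order'' $\mathcal{L}_\rho\sqrt{\mathcal{L}_c^2+\mathcal{L}_r^2}$, and this does not follow:
\begin{itemize}
\item $\sqrt{\mathcal{L}_c^2+\mathcal{L}_r^2}<\mathcal{L}_c+\mathcal{L}_r$ whenever both constants are positive.
\item The sum bound on the displacement is sharp. The terms $\cen(t)-\cen(t_r)$ and $\lambda(\rad(t)-\rad(t_r))\sbo(\theta)$ are driven by the same scalar increment $t-t_r$ and can be collinear. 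For example, take $\lambda=1$, let $\cen$ drift along $\sbo(\theta)$ at speed $\mathcal{L}_c$, and let $\rad$ grow at rate $\mathcal{L}_r$. So the two terms cannot be combined like orthogonal components.
\end{itemize}
What your chain actually yields is
\[
-\rho^\phi(x)<\hat\eta+\sqrt{\mathcal{L}_\mu^2+\mathcal{L}_\rho^2(\mathcal{L}_c+\mathcal{L}_r)^2}\,\varepsilon .
\]
This can exceed $\hat\eta+\mathcal{L}\varepsilon$ for the $\mathcal{L}$ in the statement, so the hypothesis $\hat\eta+\mathcal{L}\varepsilon\le 0$ does not close the argument.

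The paper's own proof reaches exactly this expression and then simply asserts it is at most $\mathcal{L}\varepsilon$, so it has the same mismatch with the stated constant. Its intermediate object is a zero-order-hold reconstruction $\bar y$ of the sampled signal, rather than your point $(\theta_r,\lambda_r,t)$. That ZOH comparison is also how the paper makes your ``expected obstacle'' concrete: it bounds the sup-norm deviation by $(\mathcal{L}_c+\mathcal{L}_r)\varepsilon$, again a sum.

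To make your argument sound, you have two options:
\begin{itemize}
\item Define $\mathcal{L}$ with $(\mathcal{L}_c+\mathcal{L}_r)^2$ in place of $\mathcal{L}_c^2+\mathcal{L}_r^2$. This is a slightly stronger hypothesis, and every step you wrote then goes through.
\item Give a genuinely different argument for the root-sum-of-squares constant. The route ``sup-norm deviation composed with $\mathcal{L}_\rho$'' cannot deliver it.
\end{itemize}
Treating $\mathcal{L}_c$ and $\mathcal{L}_r$ as global Lipschitz constants, as you propose for your second gap, is fine. The statement itself takes them to be the Lipschitz constants of the center and radius.
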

\begin{proof}  
    From \eqref{eq:sample}, for every $t \in [0,t_f]$, there exists a sampled point $t_r$ such that $|t - t_r| \leq \varepsilon.$ Therefore, for all $t \in [0,t_f], \ -\rad(t; \Upsilon) + \rad_d \leq -\rad(t; \Upsilon) + \rad(t_r; \Upsilon) - \rad(t_r; \Upsilon) + \rad_d \leq \hat{\eta} + \mathcal{L}_r \varepsilon \leq \hat{\eta} + \mathcal{L} \varepsilon \leq 0$.

    Similarly, from \eqref{eq:sample}, for every $\mathsf{w} \!=\! (\theta,\lambda,\tau) \in \mathcal{W}$, there exists $\mathsf{w}_r \!=\! (\theta_r,\lambda_r,t_r)$, s.t., $\|(\theta,\lambda,t) \!-\! (\theta_r,\lambda_r,t_r)\| \!\leq\! \varepsilon$. Define $y, \hat{y}$ as $y(\mathsf{w}) = \cen(\tau; \Upsilon) + \lambda \rad(t; \Upsilon)\mathsf{s}(\theta), \hat{y}(\mathsf{w}_r) = \cen(t_r; \Upsilon) + \lambda_r \rad(t_r; \Upsilon)\mathsf{s}(\theta_r)$. Let, $\bar{y}:[0,t_f] \rightarrow \R^n$ be the ZOH reconstruction of $\hat{y}$ over the sampled points. Then,
    \begin{align*}
        - \rho^\phi(y) &= - \rho^\phi(y) - \rho^\phi(\bar{y}) + \rho^\phi(\bar{y}) - \rho^\phi(\hat{y}) + \rho^\phi(\hat{y}) \\ 
        & \leq \| \mu(\theta, \lambda) - \mu(\theta_r, \lambda_r) \| + \hat{\eta}  + \max_{t \in [0,t_f], |t - t_r| \leq \varepsilon} \mathcal{L}_\rho \|\bar{y}_r(t) - {y}_r(t_r)\| \\
        & \leq \hat{\eta} + \mathcal{L}_\mu \lVert (\theta, \lambda) - (\theta_r, \lambda_r) \rVert + \mathcal{L}_\rho(\mathcal{L}_c + \mathcal{L}_r)|t -t_r| \
        \leq \hat{\eta} + \sqrt{\mathcal{L}_\mu^2 + \mathcal{L}_\rho^2(\mathcal{L}_c + \mathcal{L}_r)^2}\varepsilon \leq \hat{\eta} + \mathcal{L}\varepsilon \leq 0.
    \end{align*}
    Therefore, $\rho^\phi(y) > 0$ for all $y:[0,t_f] \rightarrow \R^n$, such that $y(t) \in \B(\cen(t; \Upsilon), \rad(t; \Upsilon))$ for all $t \in [0, t_f]$. This satisfies the conditions of Definition~\ref{def:stt}, completing the proof.
\end{proof}

\begin{algorithm}
\caption{Training of the PINSTT}
\label{algo:NN_training}
\begin{algorithmic}[1]
    \Require Space-time collocation data $\mathcal{W}:=\{w_r\}_{r=1}^M$, STL specifications, batch size, learning rate, maximum number of epochs. 
    \Ensure $\Gamma (t; \Upsilon)$
    \State {Initialize PINN, trainable parameter $\Upsilon$, hyperparameters $\mathcal{L}_c, \mathcal{L}_r$. Compute the Lipschitz constant $\mathcal{L}$. Initialize $\hat{\eta}$ such that $\hat{\eta} = - \mathcal{L}\varepsilon$.}
    \For{$i\leq Epochs$ (Training starts here)}
        \State Create batches of training data from $\mathcal{W}$.
        \State Find batch losses using \eqref{eq:loss_phys}.
        \State Update the NN parameters $\Upsilon$ using Adam/SGD optimizer \cite{ruder2016} to minimize the loss.
        \If{$\mathsf{L}(\Upsilon) \approx 0$}
            \State \textbf{break}
        \EndIf
        \If{$\mathsf{L}(\Upsilon)$ not close to zero and becomes static} 
            \State display: `Specification can not be achieved with maximum control'.
        \EndIf
    \EndFor
    \State \textbf{return} $\Gamma (t; \Upsilon)$
\end{algorithmic}
\end{algorithm}

\section{Controller Synthesis}\label{sec:control}
The controller is developed using a two-step procedure inspired by a backstepping-like design. First, a reference velocity is designed to guide the system configuration within the tubes. Then, an acceleration-level controller is constructed to track this reference while respecting input constraints.
For convenience, the EL system~\eqref{eqn:sysdyn} is rewritten as
\begin{subequations}
    \begin{align}
    \dot{x} &= v, \label{eqn:sysDyn_vel}   \\
    \dot{v} &= V_M(x,v) + M(x)^{-1}\tau + M(x)^{-1}d, \label{eqn:sysDyn_acc} 
    V_M(x,v) = -M(x)^{-1}(V(x,v)+G(x)). 
    \end{align}
\end{subequations}

\subsubsection*{Stage I: Velocity-level Control}

The goal of this stage is to design a reference velocity $v_r(t)$ that ensures the system configuration remains within the prescribed STT.

To enforce the system remains within the ball-shaped STTs $\Gamma(t)$ for all time $t \in [0;t_f]$, the reference velocity vector $v_r(x,t)$ is considered as
\begin{equation}\label{eqn:bdvelcon_ball}
    v_r(t) = -\frac{x(t) - \cen(t)}{\|x(t) - \cen(t)\|} \Psi(e_x) \overline{v}, \quad \overline{v} \in \R^+
\end{equation}
where
$e_x(x,t) := (\|x(t) - \cen(t)\|) / \rad(t)$ is the normalized position error.

\begin{remark}
    The map $\Psi:\R^n \rightarrow [-1,1]^n$ is a bounded transformation function, defined in \cite[Section~3.3]{das2026prescribedperformancecontrolunknown}, that ensures the control remains within admissible limits. 
\end{remark}

\subsubsection*{Stage II: Acceleration-level Control}
In this stage, the objective is to track the reference velocity $v_{r}(t)$ from Stage I, while ensuring that the control input remains bounded.
To achieve this, the velocity tracking error $e_v(t) = v(t) - v_{r}(t)$ is constrained within exponentially decaying funnel constraints $\gamma_v: \R_0^+ \rightarrow \R^n$
\begin{equation}\label{eqn:rhov}
    \gamma_v(t) = e^{-\mu_v t}(p_v-q_v) + q_v,
\end{equation}
as
\begin{equation}\label{eqn:fun2}
    -\gamma_v(t) \prec e_v(t) \prec \gamma_v(t).    
\end{equation}
Here $p_v \in \R^n$ is the initial funnel width, with $|e_v(0)| \preceq p_v$, $q_v \in \R^n$ is the steady state limit, with $0^{n \times 1} \prec q_v \prec p_v$, and $\mu_v \succeq \textbf{0}^{n \times n}$ determines the funnel decay rate.

This leads to the acceleration-level control input $\tau(t)$
\begin{equation}\label{eqn:bdcontrol}
    \tau(t) = -\text{diag}(\Psi(\varepsilon_v))\overline{\tau},
\end{equation}
where $ \varepsilon_v(t) = \text{diag}(\gamma_{v})^{-1}e_v(t)$ is the normalized velocity error and $\Psi$ is the same bounded transformation function, defined in \cite{das2026prescribedperformancecontrolunknown}, and $\overline{\tau} \in \R^n$ is the maximum permissible torque.

We now derive feasibility conditions under which the system remains within the synthesized STT under the available actuator limits.

\subsubsection*{Feasibility Condition}
We now derive feasibility conditions under which the velocity tracking error remains within the prescribed funnel bounds while ensuring that the synthesized STT is trackable under the available actuator limits. 
Since the tube center and radius evolve over time with rates bounded by $L_c$ and $L_r$, the system must possess sufficient velocity and actuation authority to follow this evolution. Therefore, the allowable values of $L_c$ and $L_r$ are fundamentally coupled with the velocity and input constraints, giving rise to the following feasibility conditions.


\textit{Stage I:}
From Section \ref{sec:neural_STT}, we have already ensured that $\|\dot{\cen}(t)\|$ and $|\dot{\rad}(t)|$ is upper bounded by $\mathcal{L}_c$ and $\mathcal{L}_r$. We define $\overline{\Gamma}:=\mathcal{L}_c + \mathcal{L}_r$ as the upper bound on the rate of variation of the tube over time, which dictates the maximum permissible velocity $\overline{v}$ given by
\begin{equation}\label{eqn:feas1}
    \overline{v} \geq \overline{\Gamma} + \|p_v\|,
\end{equation}
where $p_v \in \R^+$ is the starting position of the upper curve of the funnel. This guarantees that the system possesses sufficient actuation capacity to track the evolving tube boundaries $\overline{\Gamma}$ while rejecting the worst-case perturbation ($|e_v| \prec p_v$). It is evident that using the maximum permissible velocity $\bar{v}$, one can choose $\mathcal{L}_c$ and $\mathcal{L}_r$ such that \eqref{eqn:feas1} holds.

\textit{Stage II:}
Given the funnel constraint
$$\gamma_v(t) = e^{-\mu_vt}(p_v-q_v)+q_v,$$
and system dynamics in \eqref{eqn:sysdyn} with Assumptions \ref{assum_d}-\ref{assum_Md}, the maximum permissible torque $\overline{\tau}$ should adhere to the following constraint:
\begin{equation}\label{eqn:feas2}
    \overline{\tau} \succeq \frac{1}{\underline{m}} \left( \max(-\underline{V_M}, \overline{V_M})  +  \underline{m}_i\overline{w}  +  \mu_v(p_v-q_v)  +  \overline{a}_r \right),
\end{equation}
where given the bounded transformation function, in \cite{das2026prescribedperformancecontrolunknown}, we select an upper bound $\overline{a}_r \in \R^n$, such that $|\dot{v}_r| \preceq \overline{a}_r$. 

The following theorem formally summarizes the input-constrained approximation-free feedback controller.

\begin{theorem}\label{thm:bdcontrol}
Given the STL specification $\phi$ in \eqref{eq:STL}, let $\Gamma(t)$ denote the corresponding STT defined in \eqref{eqn:pinstt}.
For the EL system $\mathcal{S}$ in~\eqref{eqn:sysdyn} satisfying Assumptions~\ref{assum_d}--\ref{assum_Md}, suppose that the initial conditions satisfy
$x(0) \in \Gamma(0), \qquad |e_v(0)| \prec p_v,$
and that the feasibility conditions~\eqref{eqn:feas1} and~\eqref{eqn:feas2} hold.

Then, the closed-form control law $\tau(t)$ defined in Equation~\eqref{eqn:bdcontrol} guarantees that the system remains within the spatiotemporal tube, i.e., 
$$x(t) \in \Gamma(t), \ \forall t \in \R_0^+$$ 
Furthermore, the control input remains within the prescribed bound,
$$|\tau(t)| \preceq \overline{\tau}, \ \forall t \in \R_0^+.$$
Consequently, the closed-loop trajectory satisfies the STL specification, i.e., $x\models\phi$, under input constraints.
\end{theorem}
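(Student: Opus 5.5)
The proof runs backwards through the two design stages: first show that the velocity error stays in its funnel with bounded torque, then use that funnel to show the configuration stays in the tube. The input bound is immediate. Since the transformation $\Psi$ of \cite{das2026prescribedperformancecontrolunknown} maps into $[-1,1]^n$, the law \eqref{eqn:bdcontrol} gives $|\tau(t)| \preceq \overline{\tau}$ componentwise for all $t$, whatever the closed-loop behaviour. The substantive part is forward invariance, which I would prove with the usual prescribed-performance contradiction argument.

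\textbf{Setting up the contradiction.} Define the open set $\Omega := \{(x,v,t) : \|x-\cen(t)\| < \rad(t),\ -\gamma_v(t) \prec v - v_r(t) \prec \gamma_v(t)\}$. The initial condition $x(0)\in\Gamma(0)$ is needed here, if necessary in the interior, with $\Psi$ chosen so that the interior version is what is enforced. Together with $|e_v(0)|\prec p_v$, this places the initial state in $\Omega$. The closed-loop right-hand side is continuous on $\Omega$, so a maximal solution exists on some $[0,t_{\max})$ and stays in $\Omega$ there. It then suffices to show that neither constraint can be reached from inside, i.e.\ the solution stays in a compact subset of $\Omega$, so $t_{\max}$ covers the horizon.

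\textbf{Stage II (velocity funnel).} Consider component $i$ and suppose $e_{v,i}\uparrow\gamma_{v,i}$, so that $\varepsilon_{v,i}\uparrow 1$ and $\Psi(\varepsilon_{v,i})\to 1$. Then $\tau_i\to-\overline{\tau}_i$, and by \eqref{eqn:sysDyn_acc}
\[
\dot e_{v,i} = V_{M,i} + (M^{-1}\tau)_i + (M^{-1}d)_i - \dot v_{r,i}.
\]
Using Assumptions~\ref{assum_tau}--\ref{assum_Md} and $|\dot v_r|\preceq \overline{a}_r$, this is bounded above by $\max(-\underline{V}_M,\overline{V}_M)_i - \underline{m}\,\overline{\tau}_i + \underline{m}_i\overline{w}_i + \overline{a}_{r,i}$. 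Condition \eqref{eqn:feas2} makes this at most $-\mu_v(p_v-q_v)_i \le \dot\gamma_{v,i}$. Hence $\frac{d}{dt}(e_{v,i}-\gamma_{v,i})\le 0$ near the boundary, and the upper boundary is not reached; the lower boundary is symmetric.

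\textbf{Stage I (tube).} Take $V = \|x-\cen\| - \rad$. Writing $v = v_r + e_v$, we get
\[
\dot{\|x-\cen\|} = -\Psi(e_x)\overline{v} + \frac{(x-\cen)^\top}{\|x-\cen\|}(e_v - \dot\cen),
\]
which is at most $-\Psi(e_x)\overline{v} + \|p_v\| + \mathcal{L}_c$ because $|e_v|\prec\gamma_v\preceq p_v$. As $e_x\uparrow1$, $\Psi\to1$, so $\dot V \le -\overline{v} + \|p_v\| + \mathcal{L}_c + \mathcal{L}_r \le 0$ by \eqref{eqn:feas1} and $\|\dot\cen\|\le\mathcal{L}_c$, $|\dot\rad|\le\mathcal{L}_r$ (Theorem~\ref{th:constr}). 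Therefore $e_x$ stays bounded away from $1$.

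\textbf{Main obstacle.} The delicate step is the limiting argument, because $\Psi$ saturates at $\pm1$ rather than blowing up. The boundary inequalities are therefore not strict margins; they rely on equality-tolerant comparison plus strict initial conditions. I would handle this by adding a small slack to \eqref{eqn:feas1}--\eqref{eqn:feas2}, or by invoking the invariance property of $\Psi$ from \cite{das2026prescribedperformancecontrolunknown}. That yields a compact invariant subset of $\Omega$, hence $t_{\max}=\infty$ (or at least $t_f$).

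\textbf{Conclusion.} With $x(t)\in\Gamma(t)$ on $[0,t_f]$, Definition~\ref{def:stt}, validated by Theorem~\ref{th:constr}, gives $\rho^\phi(x)>0$ and hence $x\models\phi$.
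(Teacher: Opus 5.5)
Your proposal is essentially the paper's proof: the same two boundary-contradiction arguments, one for the velocity funnel using $\tau_i\to\mp\overline{\tau}_i$ together with \eqref{eqn:feas2}, and one for the tube using $\Psi(e_x)\to 1$, $|e_v|\prec p_v$ and \eqref{eqn:feas1}, with the input bound read off directly from $\Psi\in[-1,1]$. Doing the funnel first (since the tube argument consumes $|e_v|\prec p_v$), the maximal-solution framing, and your caveat about non-strict inequalities under a saturating $\Psi$ refine points the paper's proof passes over; they do not change the route.
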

\begin{proof}
The proof is divided into two stages. For the first stage, we show that the reference velocity vector $v_r(t)$ in Equation \eqref{eqn:bdvelcon_ball} enforces the system state $x(t)$ to be confined within the STT $\Gamma(t)$, $\forall t \in \R_0^+$. In stage II, we prove that the control law $\tau(t)$ in \eqref{eqn:bdcontrol} keeps the velocity tracking error $e_v(t)$ within the funnel $[-\gamma_v(t), \gamma_v(t)]$ for all $t \in \R_0^+$ \eqref{eqn:fun2}. In both stages, we will show the result using the contradiction method.

\noindent\textbf{Stage 1:}
Let the first time instance when the system state $x(t)$ exits the STT $\Gamma(t)$ due to the applied velocity input $v_r(t)$ as of \eqref{eqn:bdvelcon_ball} is denoted by $\tx$.
Then,
\begin{gather}\label{Eq:inqe_tx_ball}
    \|x(t) - \cen(t)\| < \rad(t), \text{ for all } t \in [0, \tx).
\end{gather}
As the $x(t)$ approaches the STT boundary, $\|x(t) - \cen(t)\| \rightarrow \rad(t)$. For crossing the boundary, we have the following implications:     
    \begin{align*}
        &\|x(t) - \cen(t)\| < \rad(t) \Rightarrow \|x(t) - \cen(t)\| \uparrow \rad(t) \\
        \Rightarrow & \lim_{\|x(t) - \cen(t)\| \uparrow \rad(t)} \frac{d}{dt} \|x(t) - \cen(t)\| > \dot{\rad}(t) \\
        \Rightarrow &\lim_{\|x(t) - \cen(t)\| \uparrow \rad(t)} (x(t) - \cen(t))^\top (\dot{x}(t) - \dot{\cen}(t)) > \dot{\rad}(t)\|x(t)-\cen(t)\| \\
        \Rightarrow &\lim_{\|x(t) - \cen(t)\| \uparrow \rad(t)} (x(t) - \cen(t))^\top \dot{x}(t) >  \lim_{\|x(t) - \cen(t)\| \uparrow \rad(t)}(x(t) - \cen(t))^\top \dot{\cen}(t) + \dot{\rad}(t)\|x(t) - \cen(t)\|.
    \end{align*}
    Substituting the dynamics from Equation~\eqref{eqn:sysDyn_vel} with velocity input from \eqref{eqn:bdvelcon_ball},
    \begin{align*}
        &\lim_{\|x(t) - \cen(t)\| \uparrow \rad(t)} (x(t) - \cen(t))^\top v > \lim_{\|x(t) - \cen(t)\| \uparrow \rad(t)}(x(t) - \cen(t))^\top \dot{\cen}(t)+\dot{\rad}(t)\|x(t) - \cen(t)\|\\
        \Rightarrow &\lim_{\|x(t) - \cen(t)\| \uparrow \rad(t)} \!\!(x(t) \!-\! \cen(t))^\top \Bigg(-\frac{(x(t) \!-\! \cen(t))}{\|x(t) \!-\! \cen(t)\|} \|\bar{v}\|+e_v\Bigg) \!> \lim_{\|x(t) - \cen(t)\| \uparrow \rad(t)} \!\!(x(t) \!-\! \cen(t))^\top \dot{\cen}(t)+\dot{\rad}(t)\|x(t) \!-\! \cen(t)\| \\
        \Rightarrow &\lim_{\|x(t) - \cen(t)\| \uparrow \rad(t)} \|x(t) - \cen(t)\| (-\|\bar{v}\| + \|p_v\|) > \lim_{\|x(t) - \cen(t)\| \uparrow \rad(t)} \|x(t) - \cen(t)\|(\|\dot{\cen}(t)\| + \dot{\rad}(t)) \\
        \Rightarrow &\|\bar{v}\| < \overline{\Gamma} + \|p_v\|.
    \end{align*}
    However, this contradicts the feasibility constraint in Equation \eqref{eqn:feas1}. Hence, $\|x(t) - \cen(t)\| \nrightarrow \rad(t), \forall t \in [0,\tx)$, i.e., $x(t)$ never approaches the STT boundary over $t \in [0,\tx)$.
    Consequently, due to the continuity of $x(t)$, it can be concluded that there is no $\tx$ at which $x(t)$ crosses the STT boundary.
    
    Therefore, the reference velocity vector $v_r$ in~\eqref{eqn:bdvelcon_ball} constrains $x(t)$ within the STT, $\|x(t) - \cen(t)\| < \rad(t), \forall t \in \R^+_0.$

\textbf{Stage 2:}
Let $\tx$ be the first time instance when the velocity error $e_v(t)$, on the application of input $\tau(t)$ as in \eqref{eqn:bdcontrol}, violates \eqref{eqn:fun2}, i.e., $\exists i \in [1;n]$,
    $$e_{v,i}(\tx) \leq -\gamma_{v,i}(\tx) \text{ or } e_{v,i}(\tx) \geq \gamma_{v,i}(\tx).$$
Then, for all $(t,i) \in [0, \tx) \times [1;n],$
\begin{gather}\label{Eq:inqe_tv}
    -\gamma_{v,i}(t) < e_{v,i}(t) < \gamma_{v,i}(t).
\end{gather}
We will consider the following two cases for $t \in [0,\tx)$.

\textbf{Case I.} For some $i \in [1;n]$, $e_{v,i}(t)$ approaches the upper funnel constraint, i.e., $e_{v,i}(t) \rightarrow \gamma_{v,i}(t) \Rightarrow e_{v,i}(t) - \gamma_{v,i}(t) =: \overline{\delta}_{v,i} \rightarrow 0$. Following \eqref{Eq:inqe_tv}:
\begin{align*}
    &e_{v,i}(t) < \gamma_{v,i}(t) \Rightarrow \overline{\delta}_{v,i} \uparrow 0 \Rightarrow \lim_{\overline{\delta}_{v,i} \uparrow 0} \frac{d}{dt} \overline{\delta}_{v,i} > 0 \\
    \Rightarrow &\lim_{\overline{\delta}_{v,i} \uparrow 0} \dot{e}_{v,i}(t) > \lim_{\overline{\delta}_{v,i} \uparrow 0} \dot{\gamma}_{v,i}(t) > -\mu_{v,i}(p_{v,i}-q_{v,i}) \\
    \Rightarrow &\lim_{\overline{\delta}_{v,i} \uparrow 0} \dot{v}_{i}(t) > -\mu_{v,i}(p_{v,i}-q_{v,i}) + \dot{v}_{r,i}(t) > -\mu_{v,i}(p_{v,i}-q_{v,i}) - \overline{a}_{r,i}.
\end{align*}
Therefore, there exists $i \in [1;n]$, such that 
\begin{gather}\label{eqn:dv_b1}
    \lim_{\overline{\delta}_{v,i} \uparrow 0} \dot{v}_{i}(t) > -\mu_{v,i}(p_{v,i}-q_{v,i}) - \overline{a}_{r,i}.
\end{gather}
Since, $\lim_{\overline{\delta}_{v,i} \uparrow 0} \varepsilon_{v,i}(t) = 1$, we obtain $\lim_{\overline{\delta}_{v,i} \uparrow 0} \tau_{i}(t) = -\overline{\tau}_{i}$.
Using the dynamics~\eqref{eqn:sysDyn_acc} and feasibility condition \eqref{eqn:feas2}
\begin{align*}
    \lim_{\overline{\delta}_{v,i} \uparrow 0}  \dot{v}_{i}(t) \leq \overline{V}_{M,i} - \underline{m} \overline{\tau}_i + \underline{m}_i\overline{d}_i
    \leq -\mu_{v,i}(p_{v,i} - q_{v,i}) - \overline{a}_{r,i}
\end{align*}
which contradicts~\eqref{eqn:dv_b1}. 
Hence, $e_{v,i}(t) \nrightarrow \gamma_{v,i}(t), \forall (t,i) \in [0,\tx) \times [1;n]$, i.e., the velocity error $e_v(t)$ never approaches the upper funnel constraint over $t \in [0,\tx)$ in any dimension.

\textbf{Case II.} For some $i \in [1;n]$, $e_{v,i}(t)$ approaches the lower funnel constraint, i.e., $e_{v,i}(t) \rightarrow -\gamma_{v,i}(t) \Rightarrow e_{v,i}(t)+\gamma_{v,i}(t) =: \underline{\delta}_{v,i} \rightarrow 0$. Following \eqref{Eq:inqe_tv}:
\begin{align*}
    &e_{v,i}(t) > -\gamma_{v,i}(t) \Rightarrow \underline{\delta}_{v,i} \downarrow 0 \Rightarrow \lim_{\underline{\delta}_{v,i} \downarrow 0} \frac{d}{dt} \underline{\delta}_{v,i} < 0 \\
    \Rightarrow &\lim_{\underline{\delta}_{v,i} \downarrow 0} \dot{e}_{v,i}(t) < \lim_{\underline{\delta}_{v,i} \downarrow 0} -\dot{\gamma}_{v,i}(t) < \mu_{v,i}(p_{v,i}-q_{v,i}) \\
    \Rightarrow &\lim_{\underline{\delta}_{v,i} \uparrow 0} \dot{v}_{i}(t) < \mu_{v,i}(p_{v,i}-q_{v,i}) + \dot{v}_{r,i}(t)  < \mu_{v,i}(p_{v,i}-q_{v,i}) + \overline{a}_{r,i}.
\end{align*}
Therefore, there exists $i \in [1;n]$, such that
\begin{gather}\label{eqn:dv_b2}
    \lim_{\underline{\delta}_{v,i} \uparrow 0} \dot{v}_{i}(t) < \mu_{v,i}(p_{v,i}-q_{v,i}) + \overline{a}_{r,i}.
\end{gather}
Since, $\lim_{\underline{\delta}_{v,i} \downarrow 0} \varepsilon_{v,i}(t) = -1$, we obtain $\lim_{\underline{\delta}_{v,i} \downarrow 0} \tau_{i}(t) = \overline{\tau}_i.$ Using dynamics~\eqref{eqn:sysDyn_acc} and feasibility condition~\eqref{eqn:feas2}
\begin{align*}
    \lim_{\underline{\delta}_{v,i} \downarrow 0}  \dot{v}_{i}(t) \geq \underline{V}_{M,i} + \underline{m} \overline{\tau}_i - \underline{m}_i\overline{d}_i \geq \mu_{v,i}(p_{v,i} - q_{v,i}) + \overline{a}_{r,i}
\end{align*}
which contradicts~\eqref{eqn:dv_b2}. 
Hence, $e_{v,i}(t) \nrightarrow -\gamma_{v,i}(t), \forall (t,i) \in [0,\tx) \times [1;n]$, i.e., the velocity error $e_v(t)$ never approaches the lower funnel constraint over $t \in [0,\tx)$ in any dimension.

Thus, over $t \in [0, \tx])$, $e_{v,i}(t)$ never approaches the funnel constraints $-\gamma_{v,i}(t)$ and $\gamma_{v,i}(t)$ for all $i \in [1;n]$.
Consequently, due to the continuity of $e_v(t)$, it can be concluded that there is no $\tx$ at which $e_{v,i}(t)$ violates the funnel constraints $-\gamma_{v,i}(t)$ and $\gamma_{v,i}(t)$ for all $i \in [1;n]$.

Therefore, combining Stages~I and~II, the bounded control input $\tau(t)$ in \eqref{eqn:bdcontrol}, under feasibility conditions \eqref{eqn:feas1} and \eqref{eqn:feas2}, ensures that the system state $x(t)$ evolves within the STTs $\Gamma(t)$. This concludes the proof.
\end{proof}


\begin{figure*}[t]
    \centering
    \includegraphics[width=0.95\linewidth]{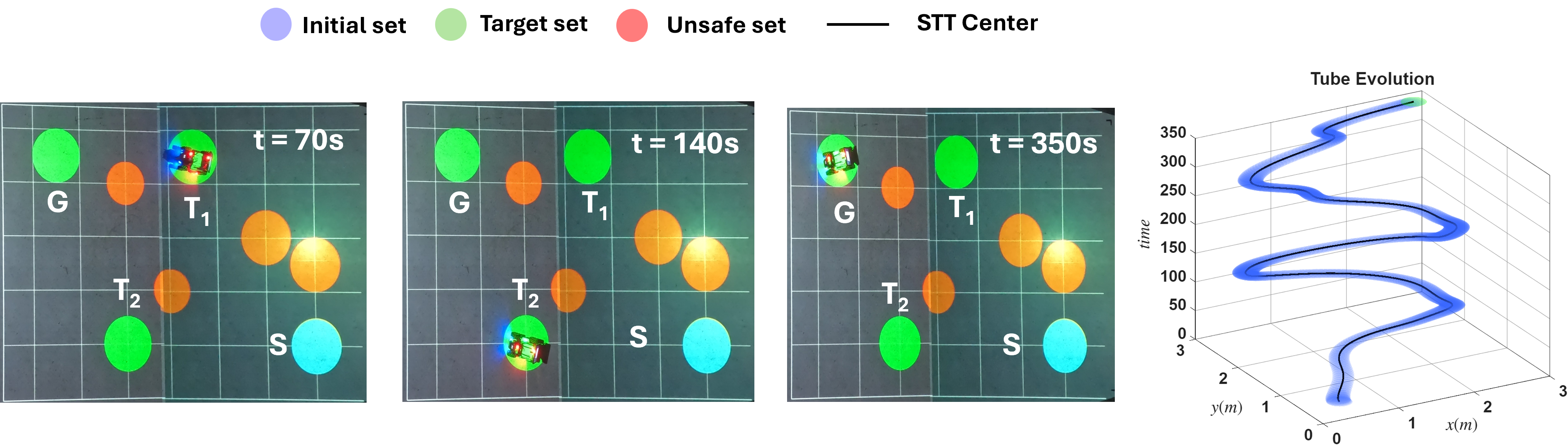}
    \caption{Mobile robot (Agile LIMO) positioning at different timestamps and the constructed STT. \href{https://drive.google.com/file/d/1pA__oWxuOKhNZ8FC_a4Puyd2x5GPKQMj/view?usp=sharing}{Video Link}}
    \label{fig:omni}
\end{figure*}

\section{Case Study}
To validate the effectiveness of the proposed approach, we present three different case studies: a 2D mobile robot, a 3D UAV, a $7$ DOF Manipulator, and a multi-agent setup. All experiments were conducted in PyTorch (Python 3.10) on a Windows machine with an Intel Core i7-14700 CPU, 32 GB RAM, and an NVIDIA GeForce RTX 3080 Ti GPU. All the videos can be found here \footnote{https://l1nk.dev/a4xbr9y}.

In case studies, each region $M$ is a hyperrectangle with center $\mathrm{center}(M)$ and size $d_M$, and is represented by the predicate $\tilde{M} := |x - \mathrm{center}(M)|_\infty < d_M$.

\subsection{Omnidirectional Robot}

For the first case study, we consider the omnidirectional robot whose dynamics is given by:
\begin{subequations}\label{eq:omni}
\begin{align}
\begin{bmatrix}
    \dot{x} & \dot{y} 
\end{bmatrix} &=  
\begin{bmatrix}
    v_x & v_y
\end{bmatrix} + d_1(t), \\
\begin{bmatrix}
    \dot{v}_x & \dot{v}_y
\end{bmatrix} &= 
\begin{bmatrix}
    a_x & a_y
\end{bmatrix} + d_2(t).
\end{align}
\end{subequations}

We consider the velocity and acceleration limits in each dimension are set to $0.15 \ m/s$ and $0.5 \ m/s^2$, respectively. We consider $\mathcal{L}_c = 0.08$ and $\mathcal{L}_r = 0.01$, satisfying condition \eqref{eqn:feas1}. Now, for the robot, we define the following STL task:
\begin{align*}
    \psi &= \square_{[0, 280]} \big((\Tilde{\So} \Rightarrow \Diamond_{[60, 80]}\Tilde{\T}_1) \land (\Tilde{\T}_1 \Rightarrow \Diamond_{[60, 80]}\Tilde{\T}_2) \land (\Tilde{\T}_2 \Rightarrow \Diamond_{[60, 80]}\Tilde{\T}_1) \big)  \land \Diamond_{[345, 355]} \square_{[0, 10]}\Tilde{\G} \land \square_{[0, 360]} \neg \Tilde{\Obs},
\end{align*}
with $\So = \B([0.5,0.5],0.15), \T_1 = \B([2.5, 1.5],0.15), \T_2 = \B([0.5,2.5], 0.15), \G = \B([2.75, 2.75],0.15)$ and $\Obs = \B([1.25, 0.5], 0.15) \cup \B([1.5, 1],0.25) \cup \B([1, 2],0.25) \cup \B([2, 2.5], 0.15)$.
This specification captures a surveillance task in which the omnibot, starting from the initial region $\So$, must repeatedly visit the targets $\T_1$ and $\T_2$ in sequence every $60-80$ seconds until $280$ seconds have elapsed. After completing this, the robot must reach the goal G between $345$ and $355$ seconds, while avoiding the obstacles $\Obs$ throughout the $360$-second mission. To test the robustness of our approach, we also introduce unknown but bounded disturbances. We first collect samples from the augmented state-space as described in \eqref{eq:sample} and apply Algorithm~\ref{algo:NN_training} to obtain the PINSTT that encapsulates the STL specification. Then, start the robot inside the tube and apply \eqref{eqn:bdvelcon_ball} and \eqref{eqn:bdcontrol} to the robot such that it remains within the tube and moves along it to satisfy the STL specification.

The offline computation time to obtain the STT through the training of PINN is 51.410 seconds, while the per step online control synthesis time is $3.17 \ \mu$sec.


We experimentally validated the approach using a hardware setup with an Agile LIMO robot. To test the robustness of our approach, we have performed experiments with nominal control and payload-variation control by attaching a 1~kg payload on top of the robot. The hardware video is available at \href{https://drive.google.com/file/d/1pA__oWxuOKhNZ8FC_a4Puyd2x5GPKQMj/view?usp=sharing}{Video Link} with snapshots of the robot with the learned STT is shown in Figure~\ref{fig:omni} while the control inputs are shown in Figure \ref{fig:limo_test}. The control effort occasionally spikes within the prescribed limits due to wheel slip near the tube boundary.

\begin{figure}[ht]
\centering
\includegraphics[width=0.75\linewidth, height=0.25\linewidth]{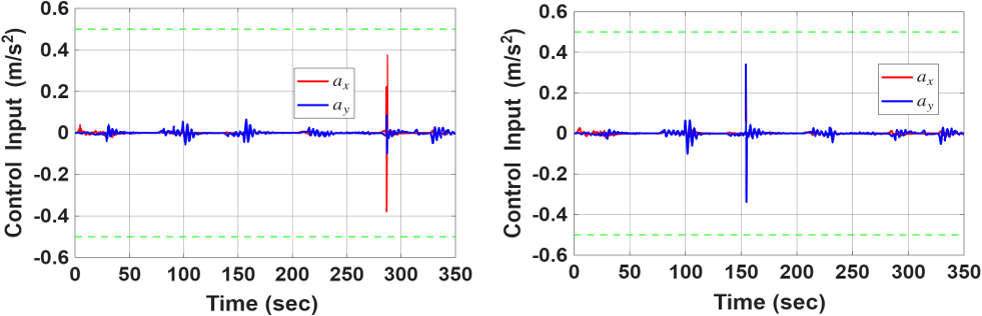}
\caption{Control inputs remain within bounds under nominal (left) and payload-variation (right) conditions.}
\label{fig:limo_test}
\end{figure}

\subsection{Quadrotor}

\begin{figure}[t]
    \centering
    \includegraphics[width=0.65\linewidth]{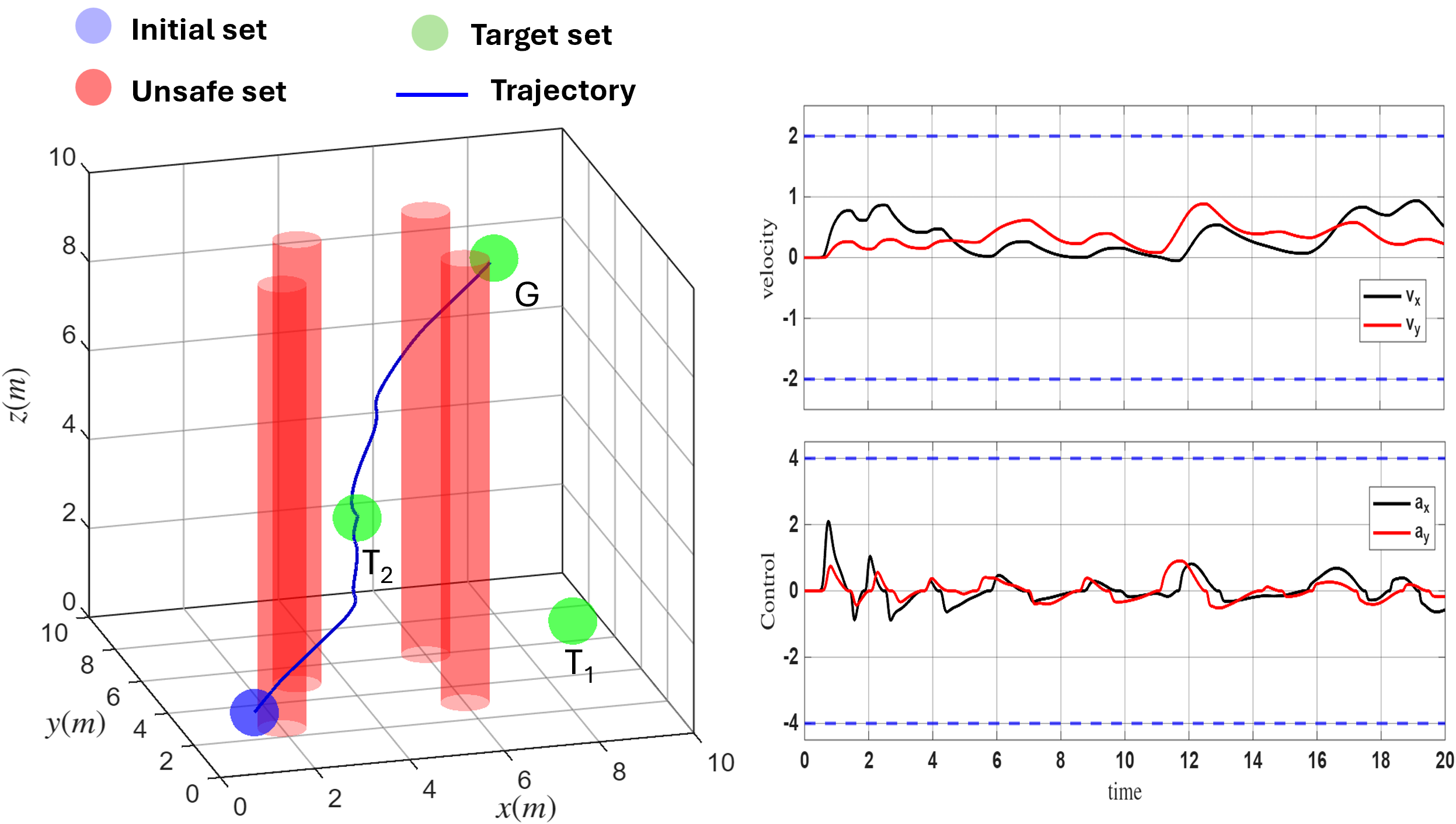}
    \caption{Quadrotor trajectory (left). Velocity and control inputs remain within the prescribed bounds (right). \href{https://drive.google.com/file/d/1kF4NjDsS4QaSV_7xyObfzDzOQzjDZ2H-/view?usp=sharing}{Video Link}}
    \label{fig:drone}
\end{figure}

We next evaluate the framework on a Quadrotor operating in a 3D environment with second-order dynamics \cite{APF_drone}:
\begin{subequations}\label{eq:drone}
\begin{align*}
\begin{bmatrix}
    \dot{x} & \dot{y} & \dot{z}
\end{bmatrix} &= 
\begin{bmatrix}
    v_x & v_y & v_z
\end{bmatrix} + d_1(t), \\
\begin{bmatrix}
    \dot{v}_x & \dot{v}_y & \dot{v}_z
\end{bmatrix} &= 
\begin{bmatrix}
    a_x & a_y & a_z
\end{bmatrix} + d_2(t).
\end{align*}
\end{subequations}

For the drone, the velocity and acceleration limits are set as $2 \ m/s$ and $4 \ m/s^2$ in each dimension respectively. Therefore, we have considered $\mathcal{L}_c = 1$ and $\mathcal{L}_r = 0.5$ and the funnel as $\gamma_v(t) = (0.4-0.002)e^{-2t} + 0.002$. So, the feasibility condition \eqref{eqn:feas1} has been satisfied as $\overline{v} = 2 > \bar{\Gamma} + p_v = 1+0.5+0.4 = 1.9$. We consider the transformation function $\Psi(x)$ to be $\tanh(5x^3)$. Therefore, with the choice of $\bar{a}_r = 2.5$, the feasibility condition \eqref{eqn:feas2} is satisfied with some bounded disturbance. Now STL specification is given as:
\begin{align*}
    \psi = \Tilde{\So} \Rightarrow \Diamond_{[0, 10]}(\Tilde{\T}_1 \lor \Tilde{\T}_2) \land \Diamond_{[19, 20]} \Tilde{\G} \land \square_{[0,  20]} \neg \Tilde{\Obs},
\end{align*}
where $\So = \B([1, 1, 1], 0.5), \T_1 = \B([8, 2, 2], 0.5), \T_2 = \B([4, 4, 4], 0.5), \G = \B([8, 8, 8], 0.5)$. The unsafe set $\Obs$ is a union of multiple static cylindrical obstacles centered at $[2,2.5], [6,3], [6,6], [3,5]$ with radius $0.5$. 
This STL formula specifies a sequential temporal task where the drone, starting from the initial region $\So$, must eventually reach either region $\T_1$ or $\T_2$ within 0-10 seconds, and subsequently proceed to the goal region $\G$ within $19-20$ seconds, while always avoiding the obstacles $\Obs$.
Figure \ref{fig:drone} shows the quadrotor trajectory satisfying the given STL specification while the control input and velocity are bounded within the specified bounds. The simulation video is available in the \href{https://drive.google.com/file/d/1kF4NjDsS4QaSV_7xyObfzDzOQzjDZ2H-/view?usp=sharing}{Video Link}. The offline PINSTT computation time is 5.284 seconds, while the per-step online control synthesis time is $3.41 \ \mu$sec. 

\subsection{7 DOF Franka Research 3 Manipulator}

\begin{figure}[ht]
\centering
\begin{subfigure}{0.45\textwidth}
    \centering
    \includegraphics[width=0.85\textwidth]{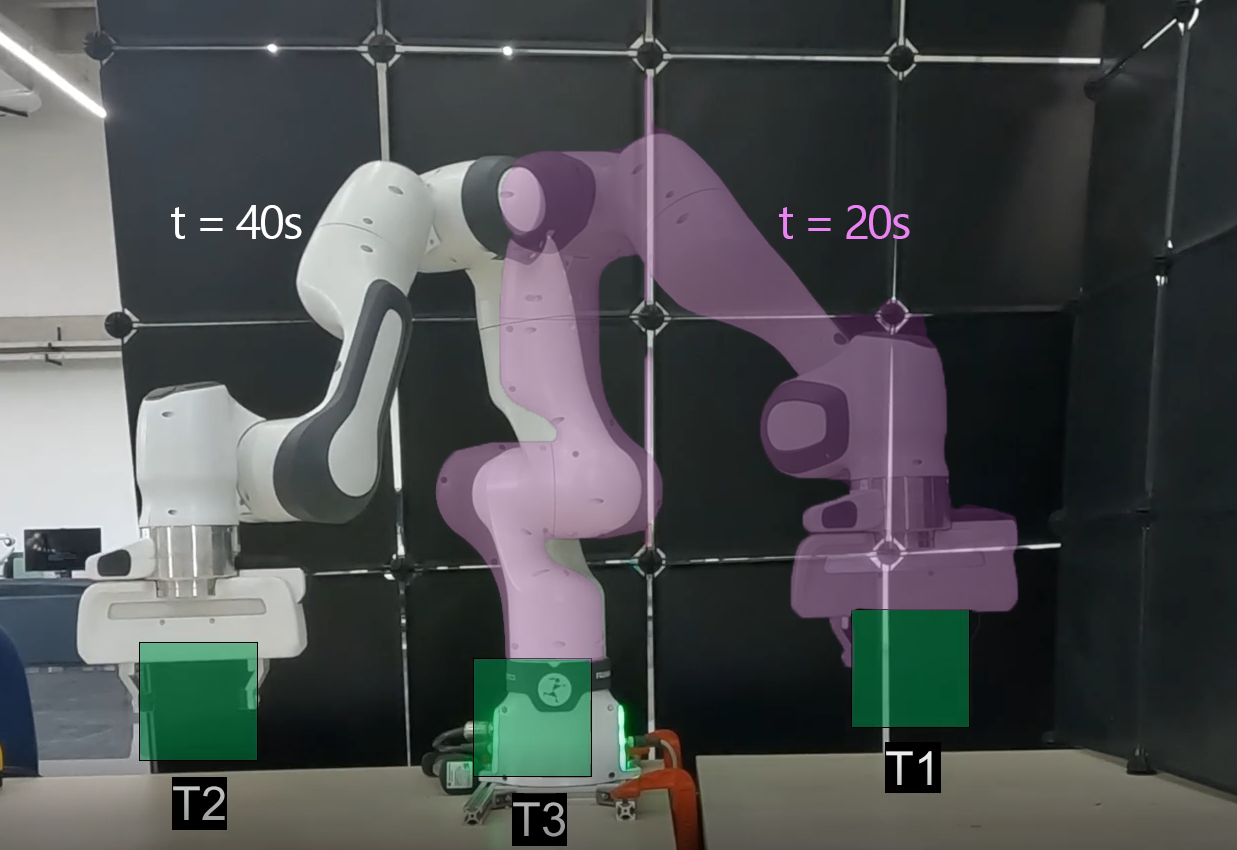}
    \caption{}
\end{subfigure}
\hfill
\begin{subfigure}{0.45\textwidth}
    \centering
    \includegraphics[width=0.95\textwidth, height=0.8\linewidth]{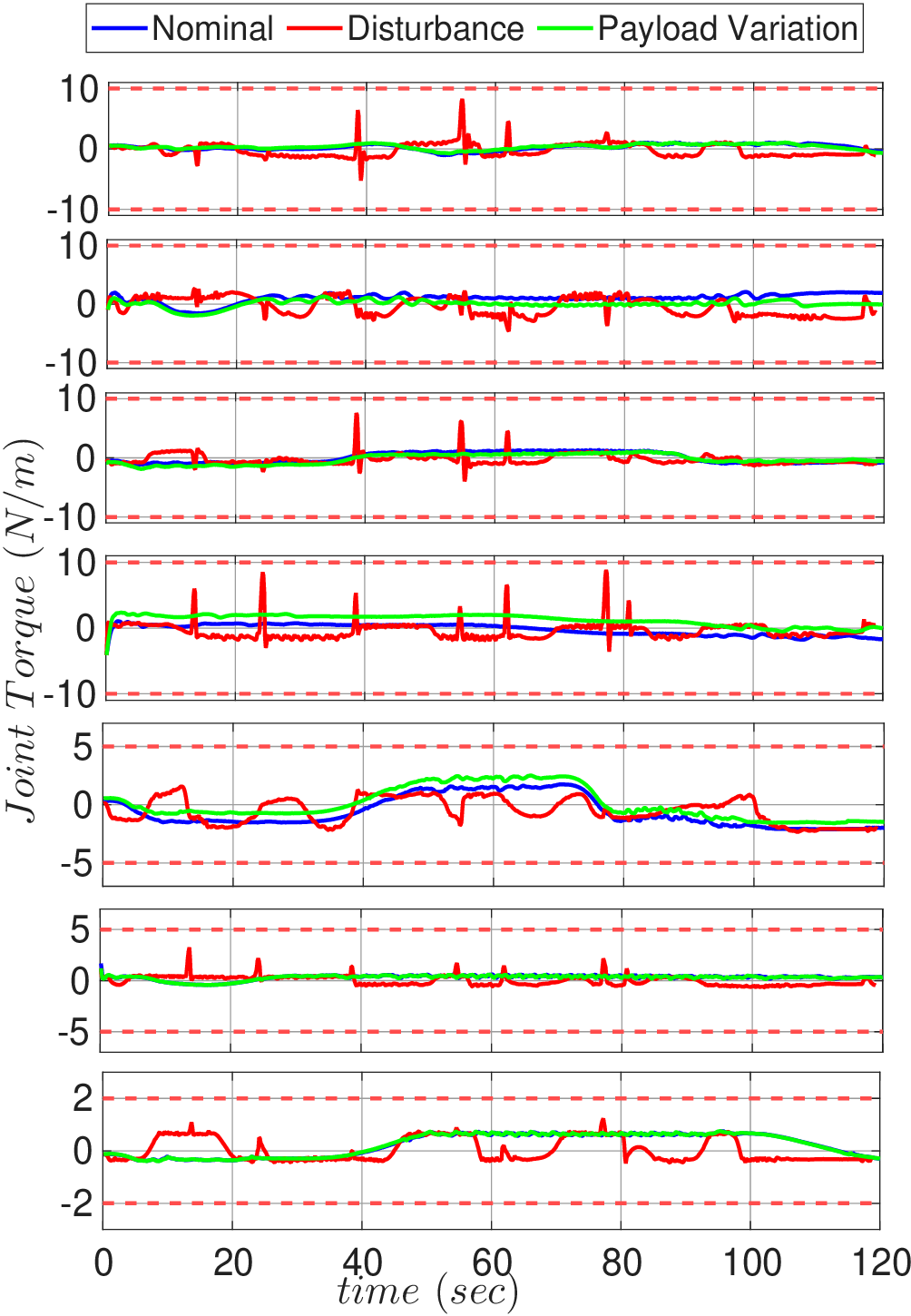}
    \caption{}
\end{subfigure}
\vspace{-0.2cm}
\caption{7-DOF Franka Research 3 Manipulator: (a) The end-effector reaches ${\T}_1, {\T}_2, {\T}_3$ satisfying the STL specification. (b) Joint torques remain bounded under nominal control, payload variation, and external disturbances. \href{https://drive.google.com/file/d/1Gg1KMfZ1AVjDMXuFT8gOVqyAn0ZwhXSC/view}{Video Link}}
\label{fig:franka_seq_reach}
\end{figure}

For the third case study, we use a 7-DOF Franka Emika Research 3 robot with joint limits $\overline{v} = 2.5 \ rad/s$ for all the joints and $\overline{\tau} = [10, 10, 10, 10, 5, 5, 2]^\top \ N/m$.
We define a sequential reachability task for the manipulator, defined by the following STL logic in its task-space:
\begin{align*}
    \psi &= \square_{[0, 120]}\big((\Tilde{\So} \Rightarrow \Diamond_{[18,23]} \Tilde{\T}_1 \land (\Tilde{\T}_1 \Rightarrow \Diamond_{[18,23]} \Tilde{\T}_2) \land (\Tilde{\T}_2 \Rightarrow \Diamond_{[18,23]} \Tilde{\T}_3) \land (\Tilde{\T}_3 \Rightarrow \Diamond_{[18,23]} \Tilde{\T}_1) \land (\Tilde{\T}_1 \Rightarrow \Diamond_{[18,23]} \Tilde{\So}) \big).
\end{align*}
To ensure the feasibility conditions, we have chosen $\mathcal{L}_c = 1.5$ and $\mathcal{L}_r = 0.5$. The end-effector positions are chosen in a way such that the joint configuration adheres to their respective bounds. We have tested our algorithm in three different categories: (a) nominal sequential reachability, (b) payload variation (attaching a water bottle to one of the links) to illustrate robustness to unknown dynamics, and (c) external disturbances. Figure~\ref{fig:franka_seq_reach} (b) shows that the control inputs (applied joint torques) for all the cases are bounded within the prescribed limits for each joint, while the experiment video can be found in the \href{https://drive.google.com/file/d/1Gg1KMfZ1AVjDMXuFT8gOVqyAn0ZwhXSC/view?usp=sharing}{Video Link}. The offline PINSTT computation takes roughly one minute, while the per-step online control synthesis time is $5.42\ \mu$sec. 


\begin{table*}[t]
\centering
\caption{Qualitative comparison of the proposed PINSTT framework with representative STL planning and control approaches.}
\label{tab:comp_qualitative}
\resizebox{0.88\textwidth}{!}{
\begin{tabular}{lcccccccc}
\hline
\textbf{Method} &
\textbf{\begin{tabular}[c]{@{}c@{}}Closed-form\\Controller\end{tabular}} &
\textbf{\begin{tabular}[c]{@{}c@{}}Formal STL\\Guarantee\end{tabular}} &
\textbf{\begin{tabular}[c]{@{}c@{}}Full STL\\Support\end{tabular}} &
\textbf{\begin{tabular}[c]{@{}c@{}}Unknown\\Dynamics\end{tabular}} &
\textbf{\begin{tabular}[c]{@{}c@{}}Bounded\\Disturbance\end{tabular}} &
\textbf{\begin{tabular}[c]{@{}c@{}}Input\\Constraints\end{tabular}} &
\textbf{\begin{tabular}[c]{@{}c@{}}Online\\Optimization\end{tabular}}
\\
\hline

RRT* \cite{linard2023real}
& -$^{1}$
& \xmark
& \cmark
& -$^{1}$
& -$^{1}$
& -$^{1}$
& -$^{1}$
\\

MPC \cite{raman2014model}
& \xmark
& \xmark
& \cmark
& \xmark
& \cmark
& \cmark
& \cmark
\\

CBF-based methods \cite{liu2023safe}
& \xmark
& \cmark
& \xmark
& \xmark
& \xmark
& \cmark
& \cmark
\\

PPC \cite{lindemann2017prescribed}
& \cmark
& \cmark
& \xmark
& \cmark
& \cmark
& \xmark
& \xmark
\\

MILP \cite{raman2015reactive}
& \xmark
& \xmark
& \cmark
& \xmark
& \xmark
& \xmark
& \cmark
\\

STT \cite{STT_STL}
& \cmark
& \cmark
& \cmark
& \cmark
& \cmark
& \xmark
& \xmark
\\

\textbf{PINSTT (Ours)}
& \cmark
& \cmark
& \cmark
& \cmark
& \cmark
& \cmark
& \xmark
\\

\hline
\end{tabular}
}

\vspace{1mm}
\footnotesize
\text{1.} Additional planning and control mechanisms are required to guarantee STL satisfaction, prescribed-time reachability, and robustness against bounded disturbances.

\end{table*}

\begin{figure*}[t]
    \centering
    \includegraphics[width=0.9\linewidth]{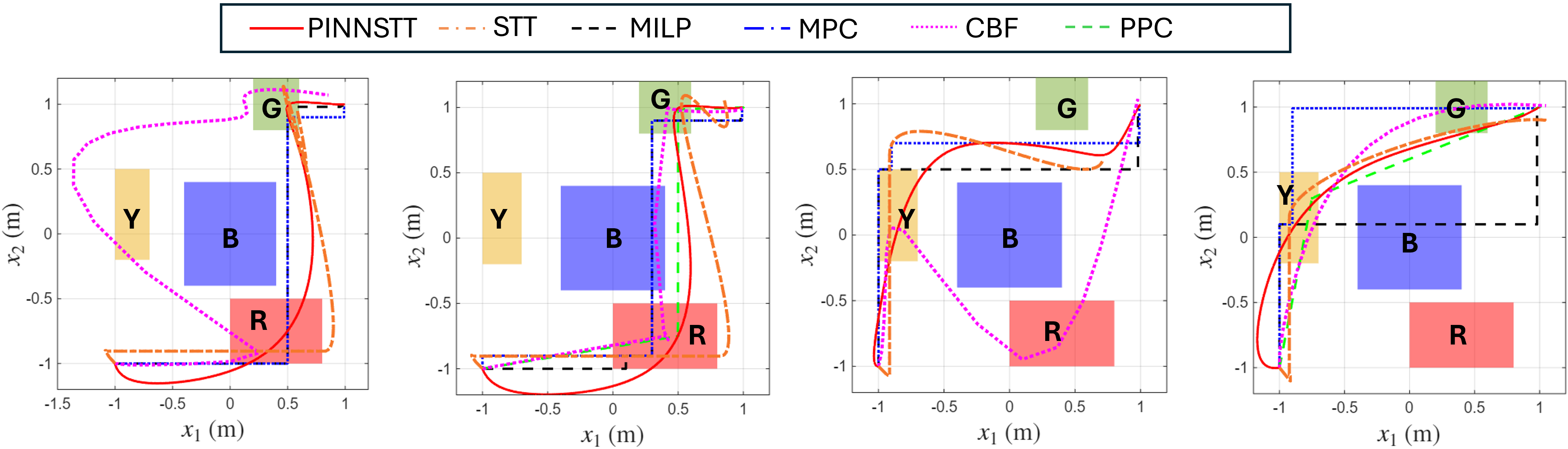}
    \caption{Comparison of the proposed PINSTT framework with existing STT, MILP, MPC, CBF, and PPC-based approaches for the four benchmark tasks. The tasks in the picture are $stlcg-1, stlfrag-1, stlcg-2, stlfrag-2$ respectively. }
    \label{fig:comparison}
\end{figure*}

\begin{table*}[t]
\centering
\caption{Computation Time (seconds)}
\label{tab:comparison}
\begin{tabular}{|l|l|l|l|l|l|l|}
\hline
Case      & \multicolumn{1}{c|}{PINSTT}                                   & \multicolumn{1}{c|}{STT \cite{STT_STL}}                                       & \multicolumn{1}{c|}{MILP \cite{raman2015reactive}} & \multicolumn{1}{c|}{MPC \cite{raman2014model}} & \multicolumn{1}{c|}{CBF \cite{liu2023safe}} & \multicolumn{1}{c|}{PPC \cite{lindemann2021funnel}} \\ \hline
stlcg-1   & 8.066+0.002 = \textbf{8.068} & 28.479+0.025 = 28.504 & 51.533                    & 44.532                   & 181.533                   & N/A                      \\ \hline
stlfrag-1 & 3.714+0.002 = \textbf{3.716} & 0.251+0.163 = 0.414 & 1.941                     & 1.936                    & 113.564                   & 0.076                    \\ \hline
stlcg-2   & 3.783+0.002 = \textbf{3.785} & 7.571+0.018 = 7.589  & 157.959                   & 159.288                  & 40.754                   & N/A                      \\ \hline
stlfrag-2 & 2.407+0.002 = \textbf{2.409} & 0.151+0.005 = 0.156 & 2.060                     & 2.424                    & 25.089                    & 0.090                    \\ \hline
\end{tabular}
\end{table*}


\section{Comparison}

\subsection{Qualitative Comparison}
To further highlight the strengths of the proposed framework, we qualitatively compare it with representative STL planning and control approaches. While sampling-based planners such as RRT* \cite{linard2023real} can generate collision-free paths, they require an additional tracking controller to execute the planned trajectory and do not provide formal guarantees. MPC-based methods \cite{raman2014model} can accommodate full STL specifications while explicitly handling input constraints and bounded disturbances; however, they require an accurate system model and rely on computationally intensive online optimization. Similarly, CBF-based approaches \cite{liu2023safe} provide formal safety guarantees and can incorporate actuator limits and bounded disturbances through optimization-based formulations, but they require known system dynamics and are generally limited to a fragment of STL specifications. PPC-based approaches \cite{lindemann2021funnel} admit closed-form controllers and can handle unknown dynamics and bounded disturbances, but are again applicable only to restricted STL fragments. Likewise, MILP-based approaches \cite{raman2015reactive} can express the full STL language, but require known system dynamics and computationally expensive online optimization. Our previous STT framework \cite{STT_STL} supports the full class of STL specifications for unknown systems under bounded disturbances through a closed-form controller; however, it does not explicitly account for actuator limits during tube synthesis or controller design. In contrast, the proposed PINSTT framework simultaneously supports the full class of STL specifications, unknown dynamics, bounded disturbances, and prescribed input constraints through a bounded closed-form controller, while avoiding online optimization. A qualitative comparison is summarized in Table~\ref{tab:comp_qualitative}.

\subsection{Quantitative Comparison}

We evaluate the proposed framework on two benchmark STL tasks, stlcg-1 and stlcg-2, introduced in \cite{sun2022multi}, together with their simplified STL fragment counterparts considered in \cite{lindemann2017prescribed}:
\begin{align*}
    &\text{stlcg-1:} (\Diamond_{[0, 15]}\square_{[0,5]}\Tilde{\mathbf{R}}) \land (\Diamond_{[0, 15]}\square_{[0,5]}\Tilde{\mathbf{G}}) \land (\square_{[0,15]}\neg \Tilde{\mathbf{B}}) \\
    &\text{stlcg-2:}(\Diamond_{[0, 15]}\square_{[0,5]}\Tilde{\mathbf{Y}}) \land (\square_{[0,15]}\neg \Tilde{\mathbf{G}}) \land (\square_{[0,15]}\neg \Tilde{\mathbf{B}}) \\
    &\text{stlfrag-1:} (\Diamond_{[0, 15]}\square_{[0,5]}\Tilde{\mathbf{R}}) \land (\Diamond_{[0, 15]}\square_{[0,5]}\Tilde{\mathbf{G}}) \\
    &\text{stlfrag-2:}(\Diamond_{[0, 15]}\square_{[0,5]}\Tilde{\mathbf{Y}}).    
\end{align*}
Figure~\ref{fig:comparison} illustrates the resulting trajectories obtained using a simulation time step of $0.01$ s.

\subsubsection{Computation Time comparison}

Table~\ref{tab:comparison} compares the computation time of the proposed method with existing approaches. For the STT-based methods, the first term denotes the offline tube synthesis time, while the second denotes the online control synthesis time. Although the previous STT approach \cite{STT_STL} is faster for simple STL fragments, its offline computation increases significantly with the complexity of the STL specification due to the underlying SMT solver. In contrast, the proposed PINN-based tube synthesis scales better for complex specifications. Furthermore, the proposed bounded transformation function results in a lower online control synthesis time than the earlier STT controller, which relied on an unbounded transformation function. Compared with MILP \cite{raman2015reactive}, MPC \cite{raman2014model}, and CBF-based methods \cite{liu2023safe}, the proposed framework also achieves significantly lower online computation by avoiding online optimization.


\begin{figure*}[t]
    \centering
    \includegraphics[width=0.8\linewidth,height=0.27\linewidth]{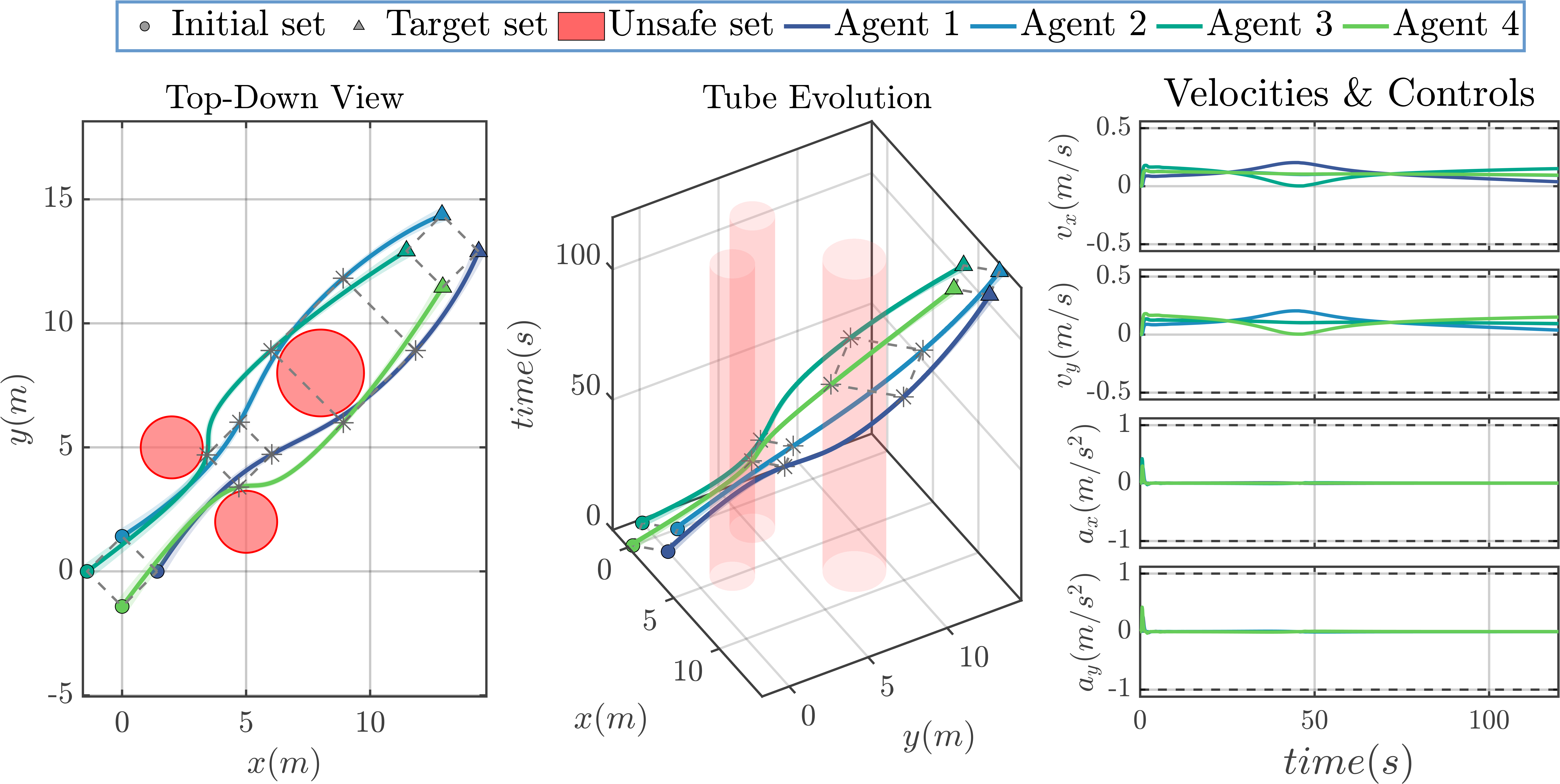}
    \caption{Multi-agent formation-preserving navigation.
    \textit{Left:} Top-down workspace projection showing circular obstacles (unsafe sets), initial/target configurations, and grey dotted snapshots highlighting the preservation of relative geometry and bearing throughout the mission. 
    \textit{Middle:} 3D space-time trajectories illustrating the continuous evolution of the formation lifted along the temporal axis. 
    \textit{Right:} Velocity and acceleration profiles demonstrating within given bounds for all agents. \href{https://drive.google.com/file/d/1SFuv0_H3HBsuI8AAtYDXU2pqJbePnGfo/view?usp=sharing}{Video Link}.}
    \label{fig:omni_formation}
\end{figure*}

\subsubsection{Input Comparison}

Table~\ref{tab:comparison_ip} compares the maximum control input for the benchmark tasks stlcg-1 and stlfrag-1. The proposed controller respects the prescribed input limit of $0.75$ throughout the task, both under nominal and perturbed conditions. In contrast, the previous STT controller \cite{STT_STL} requires substantially larger control inputs, especially under model perturbation, since it does not explicitly consider actuator limits. CBF- and MILP-based approaches also require larger control efforts under nominal conditions and are not evaluated under perturbations, since model inaccuracies can lead to violation of the STL specification.


\begin{table}[ht]
\centering
\caption{Maximum control input over complete time horizon comparison for STL specifications}
\label{tab:comparison_ip}
\resizebox{0.48\textwidth}{!}{
\begin{tabular}{|cc|cccc|}
\hline
\multicolumn{2}{|c|}{\multirow{2}{*}{Case Studies}}                                                              & \multicolumn{4}{c|}{Input Efforts}                                                              \\ \cline{3-6} 
\multicolumn{2}{|c|}{}                                                                                           & \multicolumn{1}{c|}{PINSTT} & \multicolumn{1}{c|}{STT \cite{STT_STL}}     & \multicolumn{1}{c|}{CBF \cite{liu2023safe}}    & MILP \cite{raman2015reactive} \\ \hline
\multicolumn{1}{|c|}{\multirow{2}{*}{stlcg-1}}   & \begin{tabular}[c]{@{}c@{}}without\\ disturbance\end{tabular} & \multicolumn{1}{c|}{0.75}   & \multicolumn{1}{c|}{1.2658}  & \multicolumn{1}{c|}{1.4461} &  3.200    \\ \cline{2-6} 
\multicolumn{1}{|c|}{}                           & \begin{tabular}[c]{@{}c@{}}with\\ disturbance\end{tabular}    & \multicolumn{1}{c|}{0.75}   & \multicolumn{1}{c|}{7.1901}  & \multicolumn{1}{c|}{-}      & -    \\ \hline
\multicolumn{1}{|c|}{\multirow{2}{*}{stlfrag-1}} & \begin{tabular}[c]{@{}c@{}}without\\ disturbance\end{tabular} & \multicolumn{1}{c|}{0.75}   & \multicolumn{1}{c|}{1.2937}  & \multicolumn{1}{c|}{0.7533}   &   3.600   \\ \cline{2-6} 
\multicolumn{1}{|c|}{}                           & \begin{tabular}[c]{@{}c@{}}with\\ disturbance\end{tabular}    & \multicolumn{1}{c|}{0.75}   & \multicolumn{1}{c|}{10.6582} & \multicolumn{1}{c|}{-}      & -    \\ \hline
\end{tabular}
}
\end{table}



\section{Extension to Multi-Agent Systems}
To extend the approach for a multi-agent scenario, we define the robustness metric of the complete cascaded formulation of individual agents and apply a similar strategy to generate the PINSTT for all the agents by defining a robustness metric for the overall cascaded system. Then we apply a similar control strategy for individual agents to satisfy the specification. We validate the effectiveness of the proposed approach in multi-agent scenarios using a team of omnidirectional robots, whose dynamics is given in \eqref{eq:omni}. We perform in simulation, a group of agents navigating in a bounded 2D workspace with static circular obstacles while maintaining a prescribed formation. 

We evaluate a formation-preserving navigation task featuring four agents initialized in a diamond configuration. The objective is to translate the formation diagonally across the workspace while navigating obstacles. This scenario presents significant coupling challenges: agents must simultaneously avoid environmental collisions, maintain safe inter-agent separation, and preserve the global formation structure. Because deviations by any single agent propagate through the collective, coordinated motion is essential. 

The task is formally defined using Signal Temporal Logic as a conjunction of local and global specifications:
\begin{align*}
\psi = \; 
\underbrace{
\left( \wedge_{i=1}^{4} \Tilde{\So}_i \right)
}_{\text{Initial condition}}\; 
& \land\;
\underbrace{
\square_{[0,120]} \left( \wedge_{i=1}^{4} \neg \Tilde{\Obs}_i \right)
\;\land\; 
\wedge_{i=1}^{4} \left( \Diamond_{[0,120]} \Tilde{\textbf{G}}_i \right)
}_{\text{Local specifications}} \land \; 
\underbrace{
\left(\square_{[0,120]} \Tilde{\textbf{F}}_{\text{form}}\right)
\;\land\; 
\square_{[0,120]} \left( \wedge_{i=1}^{4} \neg \Tilde{\textbf{X}}_i \right)
}_{\text{Global specification}}
\end{align*}
where $\Tilde{\Obs}_i$ represents unsafe regions, $\Tilde{\mathbf{F}}_{\text{form}}$ encodes leader-relative distance and bearing constraints, and $\Tilde{\mathbf{G}}_i$ denotes target goal regions for the $i^{\text{th}}$ agent, $i=\{1,2,3,4\}$. Also, $\So_1 = \B([\sqrt{2},0],0.3)$, $\So_2 = \B([0,\sqrt{2}],0.3)$, $\So_3 = \B([-\sqrt{2},0],0.3)$, $\So_4 = \B([0,-\sqrt{2}],0.3)$, $\mathbf{G}_1 = \B([13+\sqrt{2},13],0.3)$, $\mathbf{G}_2 = \B([13,13+\sqrt{2}],0.3)$, $\mathcal{G}_3 = \B([13-\sqrt{2},13],0.3)$, $\mathbf{G}_4 = \B([13,13-\sqrt{2}],0.3)$, $\Obs = \B([5,2],1.25) \cup \B([2,5],1.25) \cup \B([8,8],1.75)$, representing static obstacles, and $\Tilde{\mathbf{X}}_i = \bigcup_{\substack{j=1 \\ j \neq i}}^{4} \B([x_j,y_j],0.1),$ representing agents acting as dynamic obstacles to agent $i$. The desired formation is specified relative to the leader agent through nominal inter-agent distances proportional to $[2, 2\sqrt{2}, 2]$ and relative bearings $\B\left(\frac{3\pi}{4}, 0.1\right)$, $\B\left(\pi, 0.1\right)$, $\B\left(-\frac{3\pi}{4}, 0.1\right)$. 

The velocity and acceleration limits in each dimension are set to $0.5 \ \text{m/s}$ and $1.0 \ \text{m/s}^2$, respectively. To satisfy the feasibility conditions in \eqref{eqn:feas1}, we select Lipschitz constants for the tube parameters as $\mathcal{L}_c = 0.4$ and $\mathcal{L}_r = 0.05$. Formation integrity is maintained via distance and bearing constraints relative to a designated leader. A formation consistency objective penalizes geometric deviations while permitting uniform scaling, allowing the formation to adapt to constrained environments without compromising its structural identity.

The offline training of the PINN requires $59.5$ seconds while the per-step online control synthesis time is $3.17 \ \mu\text{s}$. Figure~\ref{fig:omni_formation} illustrates the resulting trajectories in 3D space-time and top-down projections. The learned STTs generate smooth, feasible trajectories that successfully navigate obstacles while maintaining formation throughout the mission horizon. Velocity and acceleration profiles remain strictly within prescribed bounds. The simulation video can be found in the \href{https://drive.google.com/file/d/1SFuv0_H3HBsuI8AAtYDXU2pqJbePnGfo/view?usp=sharing}{Video Link}. 


\section{Conclusion and Future Work} 

This paper presented a physics informed neural spatiotemporal tube (PINSTT) framework for satisfying Signal Temporal Logic (STL) specifications for unknown Euler--Lagrange systems subject to input constraints. The proposed approach parameterizes the time-varying center and radius of the tube using a physics-informed neural network, where the STL robustness metric is directly incorporated into the training objective. To provide formal guarantees over the continuous time horizon, Lipschitz continuity of the tube parameters is enforced through physics-informed loss functions using automatic differentiation, and a verification condition is derived to certify the validity of the synthesized tube. Subsequently, a closed-form, approximation-free, model-free control law is developed to ensure that the system trajectory remains within the synthesized tube while respecting prescribed input bounds. The proposed framework combines several desirable properties within a single architecture: support for the full class of STL specifications, applicability to unknown Euler-Lagrange systems, bounded closed-form control inputs, and microsecond-level online control synthesis. The effectiveness of the approach has been demonstrated through both simulation and hardware experiments involving single-agent and multi-agent scenarios.

Future work will focus on extending the framework to stochastic systems and probabilistic settings, incorporating dynamic environments, and developing distributed tube synthesis and control strategies for large-scale multi-agent systems.

\bibliographystyle{plain}
\bibliography{sources} 

@article{ruder2016,
  title={An overview of gradient descent optimization algorithms},
  author={Ruder, Sebastian},
  journal={arXiv preprint arXiv:1609.04747},
  year={2016}
}

@article{das2025spatiotemporal,
  author={Das, Ratnangshu and Basu, Ahan and Jagtap, Pushpak},
  journal={IEEE Transactions on Automatic Control}, 
  title={Spatiotemporal Tubes for Temporal Reach-Avoid-Stay Tasks in Unknown Systems}, 
  year={2026},
  volume={71},
  number={1},
  pages={512-519}
}

@ARTICLE{APF_drone,
  author={Pan, Zhenhua and Zhang, Chengxi and Xia, Yuanqing and Xiong, Hao and Shao, Xiaodong},
  journal={IEEE Transactions on Circuits and Systems II: Express Briefs}, 
  title={An Improved Artificial Potential Field Method for Path Planning and Formation Control of the Multi-{UAV} Systems}, 
  year={2022},
  volume={69},
  number={3},
  pages={1129-1133}}

@ARTICLE{CBF,
  author={Ames, Aaron D. and Xu, Xiangru and Grizzle, Jessy W. and Tabuada, Paulo},
  journal={IEEE Transactions on Automatic Control}, 
  title={Control Barrier Function Based Quadratic Programs for Safety Critical Systems}, 
  year={2017},
  volume={62},
  number={8},
  pages={3861-3876}}

@inproceedings{ames2019control,
  title={Control barrier functions: Theory and applications},
  author={Ames, Aaron D and Coogan, Samuel and Egerstedt, Magnus and Notomista, Gennaro and Sreenath, Koushil and Tabuada, Paulo},
  booktitle={IEEE 18th European control conference (ECC)},
  pages={3420--3431},
  year={2019}
}

@article{STT_STL,
  author={Das, Ratnangshu and Choudhury, Subhodeep and Jagtap, Pushpak},
  journal={IEEE Control Systems Letters}, 
  title={Approximation-Free Control for Signal Temporal Logic Specifications Using Spatiotemporal Tubes}, 
  year={2025},
  volume={9},
  number={},
  pages={1562-1567}
}

@inproceedings{maler2004monitoring,
  title={Monitoring temporal properties of continuous signals},
  author={Maler, Oded and Nickovic, Dejan},
  booktitle={International symposium on formal techniques in real-time and fault-tolerant systems},
  pages={152--166},
  year={2004},
  organization={Springer}
}

@inproceedings{donze2010robust,
  title={Robust satisfaction of temporal logic over real-valued signals},
  author={Donz{\'e}, Alexandre and Maler, Oded},
  booktitle={International conference on formal modeling and analysis of timed systems},
  pages={92--106},
  year={2010},
  organization={Springer}
}

@article{liu2021recurrent,
  title={Recurrent neural network controllers for signal temporal logic specifications subject to safety constraints},
  author={Liu, Wenliang and Mehdipour, Noushin and Belta, Calin},
  journal={IEEE Control Systems Letters},
  volume={6},
  pages={91--96},
  year={2021},
  publisher={IEEE}
}

@article{saxena2023funnel,
  title={Funnel-based reward shaping for signal temporal logic tasks in reinforcement learning},
  author={Saxena, Naman and Gorantla, Sandeep and Jagtap, Pushpak},
  journal={IEEE Robotics and Automation Letters},
  volume={9},
  number={2},
  pages={1373--1379},
  year={2023},
  publisher={IEEE}
}

@article{sun2022multi,
  title={Multi-agent motion planning from signal temporal logic specifications},
  author={Sun, Dawei and Chen, Jingkai and Mitra, Sayan and Fan, Chuchu},
  journal={IEEE Robotics and Automation Letters},
  volume={7},
  number={2},
  pages={3451--3458},
  year={2022},
  publisher={IEEE}
}

@article{lindemann2018control,
  title={Control barrier functions for signal temporal logic tasks},
  author={Lindemann, Lars and Dimarogonas, Dimos V},
  journal={IEEE Control Systems Letters},
  volume={3},
  number={1},
  pages={96--101},
  year={2018},
  publisher={IEEE}
}

@article{akella2023lipschitz,
  title={Lipschitz continuity of signal temporal logic robustness measures: Synthesizing control barrier functions from one expert demonstration},
  author={Akella, Prithvi and Badithela, Apurva and Murray, Richard M and Ames, Aaron D},
  journal={arXiv preprint arXiv:2304.03849},
  year={2023}
}

@article{das2025control,
  title={Control Barrier Functions for the Full Class of Signal Temporal Logic Tasks using Spatiotemporal Tubes},
  author={Das, Ratnangshu and Choudhury, Subhodeep and Jagtap, Pushpak},
  journal={arXiv preprint arXiv:2510.19595},
  year={2025}
}

@inproceedings{puranic2021learning,
  title={Learning from demonstrations using signal temporal logic},
  author={Puranic, Aniruddh and Deshmukh, Jyotirmoy and Nikolaidis, Stefanos},
  booktitle={Conference on Robot Learning},
  pages={2228--2242},
  year={2021},
  organization={PMLR}
}

@inproceedings{juvvi2025signal,
  title={Signal Temporal Logic Compliant Co-design of Planning and Control},
  author={Juvvi, Manas Sashank and Kurne, Tushar Dilip and Vaishnavi, J and Kolathaya, Shishir and Jagtap, Pushpak},
  booktitle={IEEE/RSJ International Conference on Intelligent Robots and Systems (IROS)},
  pages={19204--19209},
  year={2025}
}

@inproceedings{raman2014model,
  title={Model predictive control with signal temporal logic specifications},
  author={Raman, Vasumathi and Donz{\'e}, Alexandre and Maasoumy, Mehdi and Murray, Richard M and Sangiovanni-Vincentelli, Alberto and Seshia, Sanjit A},
  booktitle={53rd IEEE Conference on Decision and Control},
  pages={81--87},
  year={2014}
}

@inproceedings{raman2015reactive,
  title={Reactive synthesis from signal temporal logic specifications},
  author={Raman, Vasumathi and Donz{\'e}, Alexandre and Sadigh, Dorsa and Murray, Richard M and Seshia, Sanjit A},
  booktitle={Proceedings of the 18th international conference on hybrid systems: Computation and control},
  pages={239--248},
  year={2015}
}

@article{leung2023backpropagation,
  title={Backpropagation through signal temporal logic specifications: Infusing logical structure into gradient-based methods},
  author={Leung, Karen and Ar{\'e}chiga, Nikos and Pavone, Marco},
  journal={The International Journal of Robotics Research},
  volume={42},
  number={6},
  pages={356--370},
  year={2023},
  publisher={SAGE Publications Sage UK: London, England}
}

@book{tabuada2009verification,
  title={Verification and control of hybrid systems: a symbolic approach},
  author={Tabuada, Paulo},
  year={2009},
  publisher={Springer Science \& Business Media}
}

@article{lindemann2021funnel,
  title={Funnel control for fully actuated systems under a fragment of signal temporal logic specifications},
  author={Lindemann, Lars and Dimarogonas, Dimos V},
  journal={Nonlinear Analysis: Hybrid Systems},
  volume={39},
  pages={100973},
  year={2021},
  publisher={Elsevier}
}

@inproceedings{liu2022compositional,
  title={Compositional synthesis of signal temporal logic tasks via assume-guarantee contracts},
  author={Liu, Siyuan and Saoud, Adnane and Jagtap, Pushpak and Dimarogonas, Dimos V and Zamani, Majid},
  booktitle={IEEE 61st Conference on Decision and Control (CDC)},
  pages={2184--2189},
  year={2022}
}

@inproceedings{liu2023learning,
  title={Learning robust and correct controllers from signal temporal logic specifications using barriernet},
  author={Liu, Wenliang and Xiao, Wei and Belta, Calin},
  booktitle={62nd IEEE Conference on Decision and Control (CDC)},
  pages={7049--7054},
  year={2023}
}

@inproceedings{lindemann2017prescribed,
  title={Prescribed performance control for signal temporal logic specifications},
  author={Lindemann, Lars and Verginis, Christos K and Dimarogonas, Dimos V},
  booktitle={IEEE 56th Annual Conference on Decision and Control (CDC)},
  pages={2997--3002},
  year={2017}
}

@inproceedings{liu2023safe,
  title={Safe Model-based Control from Signal Temporal Logic Specifications Using Recurrent Neural Networks},
  author={Liu, Wenliang and Nishioka, Mirai and Belta, Calin},
  booktitle={IEEE International Conference on Robotics and Automation (ICRA)},
  pages={12416--12422},
  year={2023}
}

@article{sharma2022accelerated,
  title={Accelerated training of physics-informed neural networks (pinns) using meshless discretizations},
  author={Sharma, Ramansh and Shankar, Varun},
  journal={Advances in neural information processing systems},
  volume={35},
  pages={1034--1046},
  year={2022}
}

@article{das2026prescribedperformancecontrolunknown,
      title={Prescribed Performance Control of Unknown {E}uler-{L}agrange Systems Under Input Constraints}, 
      author={Ratnangshu Das and Pushpak Jagtap},
      journal={arXiv preprint arXiv:2507.01426},
      year={2025}
}

@book{ELbook,
  title={Euler-Lagrange systems},
  author={Ortega, Romeo and Loria, Antonio and Nicklasson, Per Johan and Sira-Ramirez, Hebertt and Ortega, Romeo and Lor{\'\i}a, Antonio and Nicklasson, Per Johan and Sira-Ram{\'\i}rez, Hebertt},
  year={1998},
  publisher={Springer}
}

@book{ELbook2,
author = {Siciliano, Bruno and Sciavicco, Lorenzo and Villani, Luigi and Oriolo, Giuseppe},
title = {Robotics: Modelling, Planning and Control},
year = {2010},
isbn = {1849966346},
publisher = {Springer Publishing Company, Incorporated}
}

@article{ELbounds,
  title={Robot control by using only joint position measurements},
  author={Nicosia, S and Tomei, P},
  journal={IEEE Transactions on Automatic control},
  volume={35},
  number={9},
  pages={1058--1061},
  year={1990},
  publisher={IEEE}
}

@article{raissi2019physics,
  title={Physics-informed neural networks: A deep learning framework for solving forward and inverse problems involving nonlinear partial differential equations},
  author={Raissi, Maziar and Perdikaris, Paris and Karniadakis, George E},
  journal={Journal of Computational physics},
  volume={378},
  pages={686--707},
  year={2019},
  publisher={Elsevier}
}

@inproceedings{linard2023real,
  title={Real-time {RRT}* with signal temporal logic preferences},
  author={Linard, Alexis and Torre, Ilaria and Bartoli, Ermanno and Sleat, Alex and Leite, Iolanda and Tumova, Jana},
  booktitle={IEEE/RSJ International Conference on Intelligent Robots and Systems (IROS)},
  pages={8621--8627},
  year={2023}
  }

\end{document}